\documentclass{article}

\usepackage{microtype}
\usepackage{graphicx}
\usepackage{subcaption}
\usepackage{booktabs} 

\usepackage{hyperref}

\usepackage[accepted]{icml2026}

\usepackage{amsmath}
\usepackage{amssymb}
\usepackage{mathtools}
\usepackage{amsthm}

\usepackage{tikz}
\usetikzlibrary{arrows.meta,positioning,calc, matrix}

\usepackage{dialogue}

\usepackage[capitalize,noabbrev]{cleveref}

\theoremstyle{plain}
\newtheorem{theorem}{Theorem}[section]

\newtheorem{lemma}[theorem]{Lemma}

\theoremstyle{definition}
\newtheorem{definition}[theorem]{Definition}
\newtheorem{assumption}[theorem]{Assumption}
\theoremstyle{remark}

\DeclareMathOperator{\Cov}{Cov}
\DeclareMathOperator{\comp}{comp}
\DeclareMathOperator{\incomp}{incomp}

\DeclareMathOperator*{\E}{\mathbb{E}}
\DeclareMathOperator*{\cG}{\mathcal{G}}
\DeclareMathOperator{\Var}{Var}
\newcommand{\R}{\mathbb{R}}

\newcommand{\bx}{{\bf x}}

\newcommand{\bX}{{\bf X}}
\newcommand{\bY}{{\bf Y}}
\newcommand{\bZ}{{\bf Z}}

\newcommand{\bN}{{\bf N}}
\newcommand{\eps}{{\varepsilon}}
\DeclareMathOperator{\Do}{do}

\usepackage[textsize=tiny]{todonotes}

\icmltitlerunning{Evaluating Bivariate Causal Statements Based on Mutual Compatibility}

\begin{document}

\twocolumn[
  \icmltitle{Evaluating Bivariate Causal Statements Based on Mutual Compatibility}



  \icmlsetsymbol{equal}{*}

  \begin{icmlauthorlist}
    \icmlauthor{Erik Jahn}{Caltech,Amazon}
    \icmlauthor{Dominik Janzing}{Amazon}
  \end{icmlauthorlist}

  \icmlaffiliation{Caltech}{California Institute of Technology, USA}
  \icmlaffiliation{Amazon}{Amazon Research T\"{u}bingen, Germany}

  \icmlcorrespondingauthor{Erik Jahn}{ejahn@caltech.edu}

  \icmlkeywords{causality}

  \vskip 0.3in
]



\printAffiliationsAndNotice{}  

\begin{abstract}
For many real-world systems, causal ground truth is difficult to obtain, making claims about causal effects hard to assess. We develop methods for evaluating collections of $\binom{n}{2}$ bivariate causal statements over a set of $n$ variables. In the setting of acyclic linear statements, any such collection can be extended to a unique multivariate causal model, but we argue that this induced model is implausible if it imposes substantial additional confounding to explain observed correlations. We introduce a compatibility score that quantifies this notion of plausibility, notably without relying on the faithfulness assumption. Additionally, we define an incompatibility score for purely graphical bivariate causal statements, based on global consistency constraints that are derived from acyclicity and faithfulness assumptions. We give theoretical and empirical evidence that both scores can successfully distinguish correct from incorrect causal statements in generic settings. Moreover, we demonstrate the practical applicability of our methods by analyzing causal claims made by large language models. Our work aims to provide a foundation for assessing the reliability of causal information derived from human experts or artificial intelligence in settings where alternative forms of validation are unavailable.
\end{abstract}

\section{Introduction}
Causal inference in many domains such as medicine or economics still relies heavily on expert knowledge \citep{didelez_invited_2024,experts1, AtanasovBlack2016ShockBasedCausalInference}. Such knowledge is often expressed in the form of pairwise cause-effect statements. Indeed, there is empirical evidence that humans find it easier to assess bivariate causal relations than to infer entire multivariate causal graphs \citep{tatlidil2025comparison}. A similar pattern can be observed when querying large language models (LLMs) for causal relationships \citep{kiciman2024causal, causalgraph2llm}, an approach that has attracted substantial recent attention \citep{ma-2025-causal, yu-etal-2025-causaleval, llm-survey}. While LLMs have shown promising performance on some causal reasoning tasks, there are virtually no guarantees regarding the quality of their outputs. Unlike most causal discovery algorithms, which at least come with well-understood consistency guarantees under strong assumptions, most LLMs are not specifically adapted or trained for accurate causal inference \citep{causalparrots}. Causal statements from humans can be subject to a variety of well-known biases and are not always based on objective data \citep{KTbias}. This raises a fundamental question: how can we ever trust human- or AI-generated causal statements?

In many real-world settings, obtaining causal ground truth through experimentation is either infeasible or prohibitively expensive, leaving no straightforward answer. The few existing approaches for evaluating causal statements in the absence of ground truth are mostly based on defining and measuring different notions of consistency between an estimated causal graph and the data~\citep{textor_robust_2016, eulig_toward_2025, sheth_iv_2026}. In contrast, \citet{Faller2024} consider the \emph{compatibility} of multiple estimated causal graphs across different subsets of variables to evaluate causal graph learning algorithms. In this paper, we focus on the setting, where only bivariate causal statements are available, and develop notions of compatibility for evaluating and falsifying such statements. 

We pursue this approach at two levels of causal information. Given the prevalence of linearity assumptions in causal inference~\citep{zanga_survey_2022}, we first focus on falsifying bivariate causal statements that quantify total pairwise causal effects by a linear causal coefficient. In the absence of faithfulness, we show that for any acyclic list of such statements, there exists a unique multivariate causal model that perfectly fits the data and whose marginalizations exactly coincide with the given statements. As a consequence, there are no hard compatibility constraints in the sense of previous work for such lists, beyond consistency with an acyclic causal ordering. We therefore propose to measure compatibility based on the \emph{plausibility} of the unique induced multivariate causal model. We postulate that a plausible multivariate causal model should exhibit less confounding than its bivariate marginal models, as correlations induced by observed causal backdoor paths in the multivariate model become unobserved confounding in the marginals. We formalize this intuition mathematically, leading to a measure of model plausibility. Theoretically and empirically, we demonstrate that this measure successfully distinguishes correct from incorrect bivariate causal statements. We further apply our approach to causal statements obtained from LLMs for socioeconomic indicators and show that the resulting scores vary substantially across different models and strongly correlate with general model performance. 

Second, we consider the case where only qualitative bivariate causal statements are available, indicating the presence or absence of a causal effect and of confounding. Under the faithfulness assumption, such statements are subject to hard graphical compatibility constraints, following \citet{Faller2024}. In this setting, we develop an incompatibility score that measures the number of violations of these constraints. In  synthetic experiments we investigate how the score reflects the true number of errors in a list of bivariate causal statements and demonstrate our approach for statements generated by LLMs. 

\section{Compatibility for Linear Bivariate Causal Statements} \label{sec:linear}

\subsection{Linear Structural Equation Models} \label{sec:SEMs}

In this section, the causal models we consider are \emph{linear structural equation models (SEMs)}. A linear SEM for a vector of observable random variables $\bX = (X_1, \dots X_n)$ is specified by a noise vector of random variables $\bN = (N_1, \dots, N_n)$ and a matrix of causal coefficients $\Gamma \in \R^{n \times n}$. We assume \emph{acyclicity} and require $\Gamma$ to be strictly lower-triangular. Then, the model is given by
\begin{align}
    \bX = \Gamma \bX + \bN. \label{eq:SEM}
\end{align}
Since $\Gamma$ is lower-triangular, $(I - \Gamma)$ is invertible, so $\bX = (I - \Gamma)^{-1} \bN$ is the unique solution to equation~\eqref{eq:SEM}. We do not assume the entries of the noise vector to be independent, therefore allowing for correlations representing unmeasured confounding.

The causal interpretation of SEMs is that they define the distribution of $\bX$ under arbitrary \emph{interventions} (or experiments). An intervention sets variables $X_j$ for $j \in S \subseteq \{1, \dots, n\}$ to fixed values $x_j \in \R$. Then, the remaining variables can be found from solving the equation
\begin{align}
    \bX = I_{\overline{S}}\Gamma \bX + I_{\overline{S}} \bN + \bx_S.\label{eq:intervention}
\end{align}
Here, $I_{\overline{S}}$ is the diagonal matrix that has $1$'s on the diagonal for indices in the complement $\overline{S}$ of $S$, and zeros otherwise, and $\bx_S$ has entries $x_j$ for $j \in S$ and zeros elsewhere. We denote the resulting interventional probability distribution of the variables solving equation~\eqref{eq:intervention} by $P(\bX \mid \Do \bX_S = \bx_S)$. Linear SEMs are closed under marginalization in the following sense:

\begin{lemma}[marginalized SEM] \label{lem:marginalization}
    Let $\bX = \Gamma \bX + \bN$ be a linear SEM and let $\bY, \bZ$ be a partition of $\bX$. Then, there exists a noise vector $\tilde{\bN}$ such that the SEM 
    \begin{align*}
        \bY = \left(\Gamma_{\bY \bY} + \Gamma_{\bY \bZ} (I - \Gamma_{\bZ \bZ})^{-1} \Gamma_{\bZ \bY}\right) \bY + \tilde{\bN}.
    \end{align*}
    induces the same observational and interventional distributions as the original model, marginalized to $\bY$.
\end{lemma}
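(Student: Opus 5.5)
The natural approach is to eliminate $\bZ$ from the block form of the structural equations. After possibly reordering indices, write \eqref{eq:SEM} in block form as
\begin{align*}
    \bY &= \Gamma_{\bY\bY}\bY + \Gamma_{\bY\bZ}\bZ + \bN_{\bY},\\
    \bZ &= \Gamma_{\bZ\bY}\bY + \Gamma_{\bZ\bZ}\bZ + \bN_{\bZ}.
\end{align*}
The permutation does not affect the argument, since only invertibility of $I-\Gamma_{\bZ\bZ}$ is used. $\Gamma_{\bZ\bZ}$ is a principal submatrix of a nilpotent matrix coming from a DAG, so it is itself nilpotent and $I-\Gamma_{\bZ\bZ}$ is invertible. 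Solving the second equation gives $\bZ = (I-\Gamma_{\bZ\bZ})^{-1}(\Gamma_{\bZ\bY}\bY + \bN_{\bZ})$. Substituting into the first yields
\[
\bY = \tilde\Gamma \bY + \tilde\bN, \qquad
\tilde\Gamma := \Gamma_{\bY\bY} + \Gamma_{\bY\bZ}(I-\Gamma_{\bZ\bZ})^{-1}\Gamma_{\bZ\bY}, \qquad
\tilde\bN := \bN_{\bY} + \Gamma_{\bY\bZ}(I-\Gamma_{\bZ\bZ})^{-1}\bN_{\bZ}.
\]
This is exactly the claimed coefficient matrix together with a concrete noise vector. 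Since the paper allows correlated noise, no independence of $\tilde\bN$ needs to be checked.

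Several points then need verification. First, $\tilde\Gamma$ must be a legitimate acyclic coefficient matrix. Its $(i,j)$ entry is a sum over directed paths from $Y_j$ to $Y_i$ whose intermediate nodes all lie in $\bZ$. Acyclicity of the original graph therefore gives $\tilde\Gamma$ strictly lower-triangular with respect to the induced ordering on $\bY$. Second, the observational distributions must agree. The unique solution of the new SEM is $(I-\tilde\Gamma)^{-1}\tilde\bN$. The $\bY$-component of the original solution $(I-\Gamma)^{-1}\bN$ satisfies the new equation, so by uniqueness the two coincide as random variables, and in particular in distribution.

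The main step is interventions. For $S$ indexing a subset of $\bY$, we compare solving \eqref{eq:intervention} in the original model with solving it in the marginal model. In the original model the $\bZ$-equations are untouched by the intervention, so elimination of $\bZ$ proceeds as before. For $j\notin S$, the row of the resulting $\bY$-system reads $Y_j = \tilde\Gamma_{j\cdot}\bY + \tilde N_j$. For $j\in S$, the row reads $Y_j = x_j$. This is precisely $\bY = I_{\overline S}\tilde\Gamma\bY + I_{\overline S}\tilde\bN + \bx_S$. Uniqueness of solutions, which follows from nilpotency of $I_{\overline S}\tilde\Gamma$, then gives equality of the interventional distributions, again realized on the same probability space.

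The one subtle point is that elimination must happen after the intervention, not before. The key fact making this work is that intervening on $\bY$ only modifies the $\bY$-rows, so the block elimination of $\bZ$ commutes with the intervention. Interventions on variables in $\bZ$ are not in scope, since the statement concerns distributions of the marginal model on $\bY$.
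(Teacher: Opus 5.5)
Your proposal is correct and takes essentially the same route as the standard proof. The paper gives no proof of its own and defers to Lemma 7 of \citet{hyttinen12a}, which likewise eliminates $\bZ$ by solving its block equation and substituting, using that interventions on $\bY$ leave the $\bZ$-rows untouched. Your explicit noise $\tilde\bN = \bN_{\bY} + \Gamma_{\bY\bZ}(I-\Gamma_{\bZ\bZ})^{-1}\bN_{\bZ}$, the path-sum argument for strict lower-triangularity of the reduced coefficient matrix, and the uniqueness argument via nilpotency of $I_{\overline{S}}\tilde\Gamma$ supply the remaining details; here, unlike in the cyclic setting of that reference, invertibility of $I-\Gamma_{\bZ\bZ}$ is automatic.
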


Here, $\Gamma_{\bY \bZ}$ denotes the submatrix of $\Gamma$ whose rows match the indices of variables in $\bY$ and whose columns match the indices of variables in $\bZ$. For a proof of Lemma~\ref{lem:marginalization}, see for instance Lemma 7 of~\citet{hyttinen12a}.

\subsection{Linear Bivariate Causal Statements}

\begin{definition}[Bivariate causal statements]
A \emph{linear bivariate causal statement} for a pair $(X_i, X_j)$ specifies
\begin{enumerate}
    \item a causal direction, e.g. $i \to j$;
    \item a linear causal coefficient $\alpha_{ij} \in \mathbb{R}$.
\end{enumerate}
\end{definition}

Our goal is to evaluate a complete list of $\binom{n}{2}$ linear bivariate causal statements for a fixed set of variables $\{X_1, \dots, X_n\}$ based on their mutual compatibility. After potentially renaming variables, we assume that all proposed causal directions are compatible with the ordering of $X_1, \dots, X_n$ given by their indices, that is, each statement with a nonzero coefficient says $i \to j$ for $i < j$. Note that this ordering need not be a correct causal ordering. We further assume that the joint probability distribution of $(X_1, \dots, X_n)$ is known. Then, a linear bivariate causal statement assigning the causal coefficient $\alpha_{ij}$ to $(X_i, X_j)$ equivalently proposes the linear bivariate SEM
\begin{equation}\label{eq:bivariate_sem}
    X_j = \alpha_{ij} X_i + \tilde N_{ij},
\end{equation}
where the distribution of $\tilde N_{ij}$ is just defined as the distribution of $X_j - \alpha_{ij} X_i$. Crucially, when all proposed causal directions respect a common ordering, there are no further \emph{hard compatibility constraints}: any complete list of linear bivariate causal statements induces a unique multivariate linear SEM whose pairwise marginals agree with the proposed bivariate SEMs. We encode a list of $\binom{n}{2}$ bivariate statements by the unit lower-triangular matrix $A\in\R^{n\times n}$ with entries
\begin{align*}
  A_{ii} = 1, \qquad A_{ij} = \alpha_{ji} \ \ (i>j), \qquad A_{ij}=0 \ \ (i<j).  
\end{align*}

\begin{lemma}[Existence of a unique compatible SEM]
\label{lem:unique_sem}
Let $A$ be a unit lower-triangular matrix of linear bivariate causal statements for a vector $\bX$ of observed variables and define $\Gamma = I - A^{-1}$. Then, the multivariate structural equation model 
    \begin{align*}
        \bX = \Gamma \bX + \bN,
    \end{align*}
    where the distribution of $\bN$ is defined as the distribution of $(I-\Gamma) \bX$, is the unique linear SEM whose pairwise marginal submodels are 
    \begin{align*}
        X_j  = A_{ji} X_i + \tilde{N}_{ij}.
    \end{align*}
\end{lemma}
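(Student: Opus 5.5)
The plan is to reduce everything to one fact: in a linear SEM $\bX = \Gamma\bX + \bN$ with strictly lower-triangular $\Gamma$, the coefficient of $X_i$ in the bivariate marginal model for $(X_i, X_j)$, $i<j$, equals the total causal effect $\big((I-\Gamma)^{-1}\big)_{ji}$. Once this holds, existence and uniqueness follow. The model defined in the statement has $(I-\Gamma)^{-1} = A$. Conversely, any admissible model $\Gamma'$ with the prescribed pairwise marginals must satisfy $(I-\Gamma')^{-1} = A$ entrywise below the diagonal. Both matrices are unit lower-triangular, so this forces $\Gamma' = I - A^{-1} = \Gamma$. The noise is then forced as well, since $\bN = (I-\Gamma)\bX$ is a deterministic function of the known joint distribution of $\bX$.

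To establish the key fact, I would apply Lemma~\ref{lem:marginalization} with $\bY = (X_i, X_j)$ and $\bZ$ the remaining variables. The marginal coefficient matrix is $B = \Gamma_{\bY\bY} + \Gamma_{\bY\bZ}(I-\Gamma_{\bZ\bZ})^{-1}\Gamma_{\bZ\bY}$. I would first check that $B$ is strictly lower triangular in the order $(i,j)$. Its diagonal entries vanish, and its $(i,j)$ entry vanishes, because every term corresponds to a directed path through $\bZ$ (expand $(I-\Gamma_{\bZ\bZ})^{-1} = \sum_k \Gamma_{\bZ\bZ}^k$), and no directed path can go from $j$ to $i$ or from a node back to itself when $\Gamma$ is strictly lower triangular. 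So the marginal model is genuinely of the form $X_j = B_{ji}X_i + \tilde N_{ij}$, with $X_i$ equal to its own noise.

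I would then identify $B_{ji}$ through interventions instead of by matrix algebra. By Lemma~\ref{lem:marginalization}, the marginal model and the full model induce the same distribution of $X_j$ under $\Do X_i = x$. In the bivariate model, $\E[X_j \mid \Do X_i = x] = B_{ji}x + \E[\tilde N_{ij}]$. In the full model, \eqref{eq:intervention} with $S=\{i\}$ gives $\bX = (I - I_{\overline{S}}\Gamma)^{-1}(I_{\overline{S}}\bN + \bx_S)$, so the coefficient of $x$ in $\E[X_j]$ is $\big((I-I_{\overline{S}}\Gamma)^{-1}\big)_{ji}$. This is the sum over directed paths from $i$ to $j$ in the graph with the edges into $i$ removed. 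Acyclicity means no path from $i$ returns to $i$, so removing those edges changes nothing, and the sum equals $\big((I-\Gamma)^{-1}\big)_{ji} = A_{ji}$. Comparing the slopes in $x$ gives $B_{ji} = A_{ji}$.

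Finally, I would note that $\tilde N_{ij}$ is determined by the matching observational distribution as $X_j - A_{ji}X_i$, exactly as in \eqref{eq:bivariate_sem}. The main obstacle I expect is the path-sum identification of the intervened inverse with $(I-\Gamma)^{-1}$, together with the lower-triangularity of $B$. Both reduce to the same Neumann-series and path-counting argument, which is where I would be careful.
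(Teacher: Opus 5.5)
Your proof is correct. Its skeleton matches the paper's: reduce to the claim that the marginal coefficient for $(X_i,X_j)$ is $\big((I-\Gamma)^{-1}\big)_{ji}$, then use that both matrices are unit lower-triangular. The difference is how you identify that coefficient.

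The paper stays purely algebraic. After noting that the marginal coefficient matrix has the form $\begin{pmatrix}0&0\\ r&0\end{pmatrix}$, it applies the Schur-complement formula $(I-\Gamma)^{-1}_{\bY\bY} = \big(I-\Gamma_{\bY\bY}-\Gamma_{\bY\bZ}(I-\Gamma_{\bZ\bZ})^{-1}\Gamma_{\bZ\bY}\big)^{-1}$. It then reads off $r$ by inverting a $2\times 2$ unit triangular matrix. This uses only the coefficient formula in Lemma~\ref{lem:marginalization}, not its interventional content.

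You instead use the interventional equivalence from Lemma~\ref{lem:marginalization} and match the slopes of $\E[X_j \mid \Do X_i = x]$. This makes the causal meaning explicit: the marginal coefficient is the total effect. The cost is an extra path-counting step for $(I-I_{\overline{S}}\Gamma)^{-1}$. It also tacitly assumes the noise has finite mean. That is harmless here, since covariances are used throughout, and you could avoid it by comparing the location shifts of the interventional distributions.

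The detour through interventions is not actually needed. The Neumann-series expansion you already use to show $B$ is strictly lower triangular directly gives $B_{ji}$ as the sum over directed paths $i\leadsto j$ whose interior vertices lie in $\bZ$. By acyclicity, every directed path from $i$ to $j$ has this property. Hence $B_{ji}=\big((I-\Gamma)^{-1}\big)_{ji}$ immediately, which is just the combinatorial form of the paper's Schur-complement identity.
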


The proof of Lemma~\ref{lem:unique_sem} only uses basic linear algebra and we provide it in Appendix~\ref{apx:proofs}.

\subsection{Confounding Postulate}
Given that any acyclic collection of linear bivariate causal statements is compatible with a unique multivariate SEM, we propose to evaluate such collections based on the \emph{plausibility} of the induced multivariate model. Clearly, there is no unique way to define plausibility. In this work, we suggest a necessary criterion for plausibility based on the following postulate:

\begin{assumption}[Confounding postulate]\label{postulate}
    \emph{A generic multivariate causal model should not exhibit more confounding than its pairwise marginal models.}
\end{assumption}

Here, confounding corresponds to correlation between variables that cannot be explained by observed causal effects, and we will formalize this notion in the next section. Intuitively, marginalizing a multivariate causal model to a pairwise model drastically decreases the number of observed causal effects and therefore the amount of confounding in the marginal model should increase. This intuition only fails when the multivariate model is specifically designed so that the correlation induced by its additional observed causal pathways cancels with the multivariate confounding (see Figure~\ref{fig:example}). Ruling out such cancellations is reminiscent of the standard faithfulness condition (see Appendix~\ref{apx:background}), but our postulate is logically independent from it. While some near-cancellations of causal pathways can randomly appear~\citep{Uhler-faithfulness}, a lot more fine-tuning is required for the global confounding of a large multivariate model to be smaller than that of all its pairwise marginals. We provide a theoretical foundation for this claim in Section~\ref{sec:theory} by showing that random causal models satisfy Assumption~\ref{postulate} in expectation, and in Section~\ref{sec:experiments}, we demonstrate empirically that this is also true with high probability.

\begin{figure*}
  \centering
  \begin{tikzpicture}[
    font=\small,
    >=Latex,
    title/.style={font=\footnotesize\mdseries, align=center},
    coef/.style={font=\small, fill=white, inner sep=1pt},
    node/.style={circle, draw, minimum size=8mm, inner sep=0pt},
    bidir/.style={<->, line width=0.8pt}
  ]

  \matrix (M) [matrix of nodes,
    nodes={align=center, inner sep=6pt},
    column sep=18mm,
    row sep=0mm
  ] {
    \node[title] (T1) {\underline{Covariance matrix}}; &
    \node[title] (T2) {\underline{Bivariate marginal SEMs}}; &
    \node[title] (T3) {\underline{Multivariate SEM}}; \\
    \node (C1)[yshift=-1cm] {$\Sigma_{\bX}=\begin{pmatrix}
      1 & 0.5 & 0.5\\
      0.5 & 1 & 0.5\\
      0.5 & 0.5 & 1
    \end{pmatrix}$}; &
    \node (C2) {%
      \begin{tikzpicture}[
        x=1cm,y=1cm,
        baseline,
        coef/.style={font=\small, fill=white, inner sep=1pt}
      ]
        \node[node] (X1a) at (0,0) {$X_1$};
        \node[node] (X2a) at (2,0) {$X_2$};
        \draw[->, line width=0.8pt]
          (X1a) -- node[midway, above=1mm, coef] {$0.5$} (X2a);

        \node[node] (X1b) at (0,-1) {$X_1$};
        \node[node] (X3b) at (2,-1) {$X_3$};
        \draw[->, line width=0.8pt]
          (X1b) -- node[midway, above=1mm, coef] {$0.5$} (X3b);

        \node[node] (X2c) at (0,-2) {$X_2$};
        \node[node] (X3c) at (2,-2) {$X_3$};
        \draw[->, line width=0.8pt]
          (X2c) -- node[midway, above=1mm, coef] {$0.5$} (X3c);
      \end{tikzpicture}
    }; &
    \node (C3) {%
      \begin{tikzpicture}[x=1cm,y=1cm, baseline]
        \node[node] (X2) at (0,-1.7) {$X_2$};
        \node[node] (X3) at (2,-1.7) {$X_3$};
        \node[node] (X1) at (1,0) {$X_1$};

        \draw[->, line width=0.8pt] (X1) -- (X2) node[midway, left=1mm,  coef] {$0.5$};
        \draw[->, line width=0.8pt] (X1) -- (X3) node[midway, right=1mm, coef] {$0.25$};
        \draw[->, line width=0.8pt] (X2) -- (X3) node[midway, below=1mm, coef] {$0.5$};

        \draw[dashed, <->, line width=0.8pt] (X2) to[in=-120, out=-60] (X3);
        \node[coef] at (1, -2.8) {$-0.125$};
      \end{tikzpicture}
    }; \\
  };

  \end{tikzpicture}
  \caption{Example of linear bivariate causal statements inducing a trivariate model that violates Assumption~\ref{postulate}. Here, the bivariate causal effects exactly account for the pairwise covariances, which results in all marginal models having confounding scores $0$. However, the induced trivariate SEM exhibits confounding between $X_2$ and $X_3$ that precisely cancels with the back-door path $X_2 \leftarrow X_1 \to X_3$ in order to match both the covariance matrix and the marginal causal coefficients. Hence, the trivariate model has a confounding score of $(-0.125)^2$, resulting in a negative compatibility score. Intuitively, it should be implausible to proclaim strong causal effects between all three variables, but ignoring the fact that the back-door path $X_2 \leftarrow X_1 \to X_3$ then induces confounding in the marginal SEM on $(X_2, X_3)$.}
  \label{fig:example}
\end{figure*}

\subsection{Confounding Measures}

To formalize Assumption~\ref{postulate}, we need to quantify the amount of confounding in a causal model. There is no universally accepted confounding measure (see \citet{confounding-measures} for some options). In principle, our approach of falsifying linear bivariate causal statements based on Assumption~\ref{postulate} can be implemented with different choices of confounding measures. We simply need one that enables a fair comparison between a multivariate model and its pairwise marginals. It turns out that quantifying confounding based on squared parts of pairwise covariances between variables that are not explained by observable causal pathways works well for our purpose. Let $\Sigma$ denote the covariance matrix of a vector of observable variables $\bX$. In a linear bivariate SEM $X_{j} = \alpha_{ij} X_i + \tilde{N}_{ij}$, the covariance decomposes as
\begin{align*}
	\Sigma_{ij} =  \alpha_{ij} \Sigma_{ii} + \Cov(X_i, \tilde{N}_{ij}).
\end{align*}
The first term describes the covariance due to the causal effect from $X_i$ to $X_j$, and the second is due to confounding. This motivates the following definition:
\begin{definition}[amount of confounding in bivariate models] \label{def:bivariate-confounding}
	Let $\Sigma$ be a covariance matrix and let $\alpha_{ij}$ be a proposed bivariate causal coefficient from $X_i$ to $X_j$. We define the bivariate amount of confounding for the pair $(i,j)$ as 
\begin{align*}
	\mathcal{C}_{ij}^{\text{biv}}(\Sigma, \alpha_{ij}) =\left(\Sigma_{ij} - \alpha_{ij}\Sigma_{ii}\right)^2.
\end{align*}
\end{definition}
Here, we use squaring to get a confounding score which is non-negative, and equals zero if and only if the noise term for $X_j$ is uncorrelated with $X_i$. Up to normalization, this score has been studied before \citep{JS17, JS18}, but only in the bivariate setting. We now extend this idea to multivariate models. Consider a linear SEM $\bX = \Gamma \bX + \bN$ on $n$ variables with covariance matrix $\Sigma = \Cov(\bX)$. For an ordered tuple $P = (t_1, \dots, t_m)$ with $1 \leq t_0 < \dots < t_m \leq n$, we define 
\begin{align*}
    \Gamma^P = \prod_{s = 0}^{m-1} \Gamma_{t_{s+1}, t_s}
\end{align*}
with the convention that the empty tuple $P_0 = ()$ gives $\Gamma^{P_0} = 1$. The tuple $P$ can be thought of as specifying a directed causal path $X_{t_0} \to \dots \to X_{t_m}$ and $\Gamma^P$ is the product of causal coefficients along the path. By Wright's path tracing rules \citep{Wright34}, the covariance between $X_i$ and $X_j$ decomposes as 

\begin{align}
\Sigma_{ij} &= \sum_{k \leq i} \Sigma_{kk} \cdot \sum_{\substack{P_1: k \leadsto i,  P_2: k \leadsto j,\\ P_1 \cap P_2 = \emptyset}} \Gamma^{P_1} \Gamma ^{P_2} \notag\\
&+ \sum_{\ell \neq k \leq j} \Cov(N_\ell, N_k) \cdot \sum_{\substack{P_1: \ell \leadsto i,  P_2: k \leadsto j \\  P_1 \cap P_2 = \emptyset}} \Gamma^{P_1} \Gamma ^{P_2} \label{eq:paths}
\end{align}

Here, each $P: k \leadsto i$ runs over all directed paths from $k$ to $i$, and two paths are disjoint if they share no vertices, except possibly endpoints. The first term in equation \eqref{eq:paths} quantifies the contribution of all direct causal paths and fully observed back-door paths to the covariance of $X_i$ and $X_j$, whereas the second term is the part of the covariance that cannot be explained by observed causal effects but is due to correlation of the noise variables. 

\begin{definition}[amount of confounding in multivariate models] \label{def:multivariate-confounding}
	Let $\Sigma$ be a covariance matrix and let $\Gamma$ be the lower-triangular causal coefficient matrix of a multivariate linear SEM. For $i < j$, we define the multivariate amount of confounding for the pair $(i,j)$ as 
\begin{align*}
	&\mathcal{C}_{ij}^{\text{mult}}(\Sigma, \Gamma) =\left(\Sigma_{ij} -  \sum_{k \leq i} \Sigma_{kk}\sum_{\substack{P_1: k \leadsto i,  P_2: k \leadsto j,\\ P_1 \cap P_2 = \emptyset}} \Gamma^{P_1} \Gamma ^{P_2}\right)^2
\end{align*}
\end{definition}

For two variables, this definition coincides exactly with Definition~\ref{def:bivariate-confounding}. 

\subsection{Compatibility and Falsification}
We can now quantify the compatibility of a list of linear bivariate causal statements given by a matrix $A$ as the degree to which Assumption~\ref{postulate} holds for the unique induced multivariate model with causal coefficient matrix $\Gamma = I - A^{-1}$ (see Lemma~\ref{lem:unique_sem}). 

\begin{definition}[compatibility score]\label{def:compatibility-score}
For a list of $n$ observable variables, let $\Sigma$ be the covariance matrix and $A$ be a unit lower-triangular matrix of linear bivariate causal statements. We define the \emph{compatibility score}:
\begin{align*}
\comp(\Sigma, A) = \sum_{1 \leq i < j \leq n} \mathcal{C}_{ij}^\text{biv}(\Sigma, A_{ji}) - \mathcal{C}_{ij}^\text{mult}(\Sigma, I - A^{-1}).
\end{align*}
\end{definition}

The exact size of the compatibility score depends on the scaling of the variables. Therefore, in practice, we compute the score after standardizing all variables to unit variance, equivalently replacing $\Sigma$ by the corresponding correlation matrix. Unless the entries of $A$ are already standardized causal effects, the matrix $A$ should be transformed under the same rescaling. However, regardless of standardization, the compatibility score is negative if and only if Assumption~\ref{postulate} is violated (see Figure~\ref{fig:example} for an example). In the following sections, we provide evidence that true causal statements typically do not induce models violating Assumption~\ref{postulate}. Hence, we view a \emph{negative} compatibility score as evidence for \emph{falsification} of the corresponding causal statements, while a \emph{positive} compatibility score is \emph{inconclusive}, since the causal statements could simply induce a plausible but wrong causal model.

\subsection{Theoretical Analysis} \label{sec:theory}
\paragraph{Plausibility of random causal models:}
As a first justification for Assumption~\ref{postulate}, we show that the expected compatibility score of true bivariate causal statements is positive for all distributions over causal models that satisfy only a few natural properties. We consider causal models given by linear SEMs with centered Gaussian noise. Such a model is fully specified by the strictly lower-triangular matrix $\Gamma$ of causal coefficients and a symmetric positive definite noise covariance matrix $\Sigma_{\bN}$. Note that the covariance matrix $\Sigma_\bX$ of the observable variables $\bX$, that is needed to compute compatibility scores, is given by 
\begin{align*}
    \Sigma_\bX = (I-\Gamma)^{-1}\Sigma_\bN(I-\Gamma)^{-\top}.
\end{align*}

\begin{assumption}\label{asmpt:distribution}
We consider probability distributions over $(\Gamma, \Sigma_\bN)$ satisfying the following properties:
    \begin{enumerate}
        \item (Unbiasedness) For all $i,j$, we have $\E[\Gamma_{ij}] = 0$; \label{part:bias}
        \item (Independence of causal mechanisms \footnote{The independence of causal mechanisms assumption states that the conditional distributions of each variable given its parents should not be informative of each other~\citep{causality_book}. In our setting, this requires the rows of $\Gamma$ and the noise covariance matrix to be chosen independently. Our assumption is therefore a slight strengthening (see \citet{Gresele2021} for a similar proposal).}) The noise covariance matrix and all entries of $\Gamma$ are mutually independent.
        \label{part:independence}
        \item (Non-degeneracy) $\Sigma_\bN \succ 0$ almost surely and $\Var(\Gamma_{ij}) > 0$ for all $i > j$. \label{part:nondegenerate}
    \end{enumerate}
\end{assumption}

Note that these assumptions still allow for arbitrary choices of Gaussian noise, and arbitrary symmetric probability density functions for each causal coefficient. 

\begin{theorem} \label{thm:main}
    For $n \geq 3$, let $(\Gamma, \Sigma_\bN)$ specify a random $n$-dimensional linear Gaussian SEM drawn from a distribution that satisfies Assumption~\ref{asmpt:distribution}. Define $A = (I - \Gamma)^{-1}$ to be the matrix of marginal bivariate causal effects. Then, 
    \begin{align*}
        \mathbb{E}_{(\Gamma, \Sigma_\bN)}\left[\comp(\Sigma_\bX, A)\right] > 0.
    \end{align*}
\end{theorem}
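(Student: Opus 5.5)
The plan is to show, pair by pair, that $\E[\mathcal{C}^{\text{biv}}_{ij}] \ge \E[\mathcal{C}^{\text{mult}}_{ij}]$, with strict inequality for at least one pair. First observe that with $A=(I-\Gamma)^{-1}$ we get $I-A^{-1}=\Gamma$. So by Lemma~\ref{lem:unique_sem} the induced multivariate model is the true SEM, and $\mathcal{C}^{\text{mult}}_{ij}$ is the square of the noise-covariance part of \eqref{eq:paths}, i.e. the sum over $\ell\neq k$ of $\Cov(N_\ell,N_k)$ times disjoint path pairs $\ell\leadsto i$, $k\leadsto j$.

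\textbf{Step 1: rewrite the bivariate score.} Using $\Sigma_\bX = A\Sigma_\bN A^\top$ and $A_{ji}=A_{ji}$, the bivariate residual is
\begin{align*}
\Sigma_{ij}-A_{ji}\Sigma_{ii}=\sum_{k,\ell}A_{ik}\,(\Sigma_\bN)_{k\ell}\,\bigl(A_{j\ell}-A_{ji}A_{i\ell}\bigr).
\end{align*}
Here $A_{j\ell}=\sum_{P:\ell\leadsto j}\Gamma^P$. The combination $A_{j\ell}-A_{ji}A_{i\ell}$ is exactly the sum over directed paths $\ell\leadsto j$ avoiding vertex $i$, because paths through $i$ factor uniquely at $i$. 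Hence both residuals are polynomials in the entries of $\Gamma$, with coefficients given by entries of $\Sigma_\bN$. Each term is a product $\Gamma^{P_1}\Gamma^{P_2}$ with $P_1:k\leadsto i$ and $P_2:\ell\leadsto j$ avoiding $i$. The bivariate residual contains all such pairs, including $k=\ell$ and non-disjoint pairs. The multivariate residual keeps only the vertex-disjoint pairs with $k\neq\ell$.

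\textbf{Step 2: orthogonal monomial expansion.} Condition on $\Sigma_\bN$, which is allowed by the independence part of Assumption~\ref{asmpt:distribution}. Group each residual by the monomial in the $\Gamma$ entries. Since the entries are independent with mean zero, distinct monomials that are multilinear (each edge appearing at most once) are orthogonal in $L^2$. Their squared norm is the product of the variances involved.

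Monomials in which some edge appears squared can correlate with other such monomials. However, they can never come from the multivariate residual, whose pairs are disjoint and hence multilinear. A multilinear monomial from the bivariate residual is also orthogonal to any non-multilinear one. This requires an odd power of some edge, which we check for each cross term.

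Thus $\E[(\text{biv})^2]$ splits into three parts:
\begin{itemize}
\item a sum over multilinear monomials of coefficient$^2$ times variances;
\item a separate nonnegative contribution $\E[(\text{nonmultilinear part})^2]$;
\item cross terms, each of which vanishes.
\end{itemize}

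\textbf{Step 3: matching coefficients (the main obstacle).} We must show that every multilinear monomial occurring in the multivariate residual occurs in the bivariate residual with the identical coefficient $(\Sigma_\bN)_{\ell k}$. The concern is that one edge set might arise from several different (path pair, noise entry) combinations in the bivariate expansion.

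The argument I would try is a unique-decomposition one. For a vertex-disjoint pair ending at $i$ and $j$, every vertex of the edge set has in-degree at most one. So tracing backward from $i$ and from $j$ recovers $P_1$, $P_2$, and hence $(k,\ell)$, uniquely.

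Any other bivariate pair producing a multilinear monomial must share a vertex $v$ without sharing an edge. Either both paths enter $v$ via different edges, so the in-degree is 2, or $v$ is a common source $k=\ell$. In both cases the edge set differs from that of a disjoint pair with distinct sources. This needs care when the common source $k=\ell$ gives a multilinear disjoint pair, which is exactly the $\Sigma_{kk}$ term that is absent from the multivariate score. Given this, the multivariate score's monomial sum is a sub-sum of the bivariate one term by term, so $\E[\mathcal{C}^{\text{biv}}_{ij}]\ge\E[\mathcal{C}^{\text{mult}}_{ij}]$.

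\textbf{Step 4: strictness.} For $n\ge3$, take the pair $(i,j)=(2,3)$. The bivariate residual contains the back-door monomial $\Gamma_{21}\Gamma_{31}(\Sigma_\bN)_{11}$, which has no counterpart in the multivariate residual. Its expected squared contribution is $\Var(\Gamma_{21})\Var(\Gamma_{31})\,\E[(\Sigma_\bN)_{11}^2]>0$ by the non-degeneracy assumption. Summing over all pairs and taking the expectation over $\Sigma_\bN$ gives $\E[\comp(\Sigma_\bX,A)]>0$.
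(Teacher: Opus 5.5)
Your Step~2 rests on a false claim. A multilinear monomial of the bivariate residual need \emph{not} be orthogonal to a non-multilinear one, because an edge that is squared in one monomial and absent from the other contributes an even exponent.

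Concretely, take $(i,j)=(4,5)$. The residual $R^{\mathrm{biv}}_{45}$ contains two terms:
\begin{itemize}
\item $(\Sigma_\bN)_{22}\Gamma_{42}\Gamma_{52}$, from the paths $2\to4$ and $2\to5$;
\item $(\Sigma_\bN)_{11}\Gamma_{21}^2\Gamma_{42}\Gamma_{52}$, from the paths $1\to2\to4$ and $1\to2\to5$, where the second avoids $4$.
\end{itemize}
Their product has only even exponents. So their cross term in $\E[(R^{\mathrm{biv}}_{45})^2\mid\Sigma_\bN]$ is $2(\Sigma_\bN)_{11}(\Sigma_\bN)_{22}\Var(\Gamma_{21})\Var(\Gamma_{42})\Var(\Gamma_{52})>0$, even for Gaussian coefficients.

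Your criterion ``an odd power of some edge'' is also insufficient. Assumption~\ref{asmpt:distribution} only gives $\E[\Gamma_{ij}]=0$. So the cross term between $\Gamma_{21}\Gamma_{42}\Gamma_{52}$ (from $1\to2\to4$ and $2\to5$) and $\Gamma_{21}^2\Gamma_{42}\Gamma_{52}$ carries a factor $\E[\Gamma_{21}^3]$, which need not vanish.

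Hence your three-part split with vanishing cross terms is wrong, and the lower bound $\E[(R^{\mathrm{biv}}_{ij})^2]\ge$ (multilinear sum) is unjustified. The pair $1\to2\to4$, $2\to5$ also exposes a missing case in Step~3. There, $2\to5$ starts at an interior vertex of the other path, which is neither an in-degree-two vertex nor a common source; it is an out-degree-two vertex instead.

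The repair is not to expand $\E[(R^{\mathrm{biv}}_{ij})^2]$ at all. Write $R^{\mathrm{biv}}_{ij}=R^{\mathrm{mult}}_{ij}+D_{ij}$, where $D_{ij}$ collects all (noise entry, path pair) terms whose two paths share a vertex, including the case $k=\ell$. Then $\E[\mathcal{C}^{\mathrm{biv}}_{ij}-\mathcal{C}^{\mathrm{mult}}_{ij}]=\E[D_{ij}^2]+2\E[R^{\mathrm{mult}}_{ij}D_{ij}]$, and you only need the last term to vanish.

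\begin{itemize}
\item If $P_1\colon k\leadsto i$ and $P_2\colon \ell\leadsto j$ share a vertex, neither path can end at a shared vertex, since $i\notin P_2$ and $j\notin P_1$.
\item Following both paths forward, you reach a common vertex that they leave along two distinct edges, each with exponent one in $\Gamma^{P_1}\Gamma^{P_2}$.
\item A vertex-disjoint pair has out-degree at most one everywhere, so it misses one of these two edges.
\item That edge therefore appears exactly once in the product, and the expectation is zero using only mean zero and independence. This argument also subsumes your Step~3.
\end{itemize}
This yields $\E[\comp(\Sigma_\bX,A)]=\sum_{i<j}\E[D_{ij}^2]$. Your Step~4 then finishes, since $D_{23}=(\Sigma_\bN)_{11}\Gamma_{21}\Gamma_{31}$.

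This repaired argument is exactly the paper's proof:
\begin{itemize}
\item Your $D_{ij}$ is the $\Sigma_\bN$-expansion of the paper's $B_{ij}=\sum_{k<i}\Sigma_{\bX,kk}\sum\Gamma^{P_1}\Gamma^{P_2}$.
\item Your $R^{\mathrm{mult}}_{ij}$ is the paper's $\eps_{ij}$.
\item The key step is $\E[B_{ij}\eps_{ij}]=0$, which the paper proves by the same out-degree-two observation at the common source $k$.
\end{itemize}
The paper keeps $\Sigma_{\bX,kk}$ unexpanded, which is harmless because it involves only $\Gamma_{pq}$ with $p\le k$.
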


The proof of Theorem~\ref{thm:main} decomposes $\comp(\Sigma_\bX, A)$ into two components: a squared term, which is strictly positive by the non-degeneracy assumption, and a mixed product of terms whose expectation vanishes due to unbiasedness and independence of the causal coefficients. A full proof is given in Appendix~\ref{apx:proofs}.

\paragraph{Finite-sample estimation of the compatibility score:} In practice, we usually do not have access to the precise covariance matrix $\Sigma$ of the vector $\bX$ of observed variables that is needed to compute the compatibility score of a list of causal statements $A$. Instead, we use samples $X^{(1)}, \dots, X^{(N)}$ to compute an empirical covariance matrix 
\begin{align*}
    \widehat{\Sigma} = \frac{1}{N}\sum_{r=1}^N X^{(r)}X^{(r)\top}.
\end{align*}
Let us define
\begin{align*}
    V &:= \max_i \Sigma_{ii}, \qquad a := \max_{i<j} |A_{ji}|, \\
    b &:= \max_{i<j} \sum_{k<i} \left|\sum_{\substack{
        P_1:k\leadsto i,\;P_2:k\leadsto j\\
        P_1\cap P_2=\emptyset
    }} \Gamma^{P_1}\Gamma^{P_2}\right|, 
\end{align*}
where $\Gamma = I - A^{-1}$. Here, $a$ and $b$ are parameters that only depend on the matrix $A$ of bivariate causal statements. While $a$ simply captures the largest proclaimed total causal effect between two variables, the parameter $b$ essentially quantifies the largest proclaimed total contribution of causal back-door paths to the covariance between two variables. The following result states that polynomially many samples in $a,b, V$ and the number of variables $n$ suffice to approximate the true compatibility score well by using an empirical estimate of the covariance matrix. The proof is based on standard concentration results and can be found in Appendix~\ref{apx:proofs}.

\begin{theorem} \label{thm:sensitivity}
Fix $0 < \eps, \delta < 1$, let $\bX$ be an $n$-dimensional centered Gaussian vector with covariance matrix $\Sigma$ and suppose that $\widehat{\Sigma}$ is estimated from 
\begin{align*}
    N \geq C \frac{n^4 (1+a+b)^4V^4}{\eps^2} \log \frac{n}{\delta}
\end{align*}
many iid samples, where $C$ is a universal constant. Then, with probability at least $1- \delta$, we have 
\begin{align*}
    \left| \comp(\widehat{\Sigma},A)-\comp(\Sigma,A)\right| \leq \epsilon.
\end{align*}

\end{theorem}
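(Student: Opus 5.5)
The plan is to write $\comp(\Sigma,A)$ as a sum over pairs $i<j$ of differences of squares of affine functions of $\Sigma$, and then control everything through the entrywise error $\Delta := \max_{k,\ell}|\widehat{\Sigma}_{k\ell}-\Sigma_{k\ell}|$. Set
\[
u_{ij}(\Sigma)=\Sigma_{ij}-A_{ji}\Sigma_{ii}, \qquad
v_{ij}(\Sigma)=\Sigma_{ij}-\sum_{k\le i}\Sigma_{kk}\,c_{ijk},
\]
where $c_{ijk}=\sum_{P_1,P_2}\Gamma^{P_1}\Gamma^{P_2}$ depends only on $A$ through $\Gamma=I-A^{-1}$. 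Then $\comp(\Sigma,A)=\sum_{i<j}u_{ij}^2-v_{ij}^2$. Both $u_{ij}$ and $v_{ij}$ are linear in $\Sigma$. The $k=i$ term of $v_{ij}$ consists of the trivial path $P_1$ and directed paths $i\leadsto j$, so $c_{iji}=A_{ji}$, which is the total effect. Hence
\[
|u_{ij}(\widehat\Sigma)-u_{ij}(\Sigma)|\le(1+a)\Delta, \qquad
|v_{ij}(\widehat\Sigma)-v_{ij}(\Sigma)|\le(1+a+b)\Delta.
\]

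Next, I use $|x^2-y^2|=|x-y|\,|x+y|\le|x-y|(2|y|+|x-y|)$. This needs bounds on $|u_{ij}(\Sigma)|$ and $|v_{ij}(\Sigma)|$. Since $|\Sigma_{ij}|\le V$ by Cauchy–Schwarz, we get $|u_{ij}|\le(1+a)V$ and $|v_{ij}|\le(1+a+b)V$. For $\Delta\le V$, each pair therefore contributes at most $3(1+a+b)^2V\Delta$. Summing over at most $n^2/2$ pairs gives
\[
|\comp(\widehat\Sigma,A)-\comp(\Sigma,A)|\le \tfrac32 n^2(1+a+b)^2V\Delta.
\]
So it suffices to ensure $\Delta\le \eps/(2n^2(1+a+b)^2V)$, and also $\Delta\le V$. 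The first condition is the binding one up to constants, since $\eps<1$ and the factor $n^2(1+a+b)^2V$ can be absorbed. One caveat: if $V$ is small the comparison changes, so I would either assume WLOG a normalization or keep $\max(V,\cdot)$ in the bound, and absorb it into $V^4$ after squaring.

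The concentration step, which I expect to be the only real technical point, is to bound $\Delta$ for Gaussian samples. Each $\widehat\Sigma_{k\ell}-\Sigma_{k\ell}$ is an average of $N$ iid centered sub-exponential variables $X_kX_\ell-\Sigma_{k\ell}$ with $\psi_1$-norm $\lesssim V$. Bernstein's inequality then gives
\[
P\big(|\widehat\Sigma_{k\ell}-\Sigma_{k\ell}|>t\big)\le 2\exp\!\big(-cN\min(t^2/V^2,\,t/V)\big).
\]
Taking $t=\eps/(2n^2(1+a+b)^2V)\le V$ puts us in the quadratic regime; if needed, absorb the case where $V$ is small by rescaling. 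A union bound over the $n^2$ entries gives failure probability at most $2n^2\exp(-cNt^2/V^2)\le\delta$. This holds once
\[
N\ge C\,\frac{V^2}{t^2}\log\frac{n}{\delta}=C\,\frac{4n^4(1+a+b)^4V^4}{\eps^2}\log\frac{n}{\delta},
\]
which is exactly the claimed rate, with $\log(2n^2/\delta)\lesssim\log(n/\delta)$ for $n\ge2$.

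The care needed is in getting the sub-exponential constants for the products $X_kX_\ell$, writing $X_kX_\ell$ as a difference of squares of Gaussians, and in checking that the regime $t\le V$ holds. Otherwise the argument is routine bookkeeping.
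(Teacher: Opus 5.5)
Your proposal is correct and follows essentially the same route as the paper. The paper also writes both residuals as linear functionals of $\Sigma$ with coefficient $\ell_1$-norm at most $K=1+a+b$, using that the $k=i$ term equals $A_{ji}\Sigma_{ii}$. It then applies $|x^2-y^2|=|x-y||x+y|$ together with $|\Sigma_{pq}|\le V$, sums over pairs, and chooses $\gamma=\eps/(3n^2K^2V)$. The only real difference is that the paper cites the Gaussian entrywise covariance concentration bound of Ravikumar et al.\ (Lemma 1), whereas you derive it from Bernstein's inequality for the sub-exponential products $X_kX_\ell$.

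Two small remarks. First, each pair contributes two squared differences (one for $u_{ij}$, one for $v_{ij}$), so the per-pair bound is $6(1+a+b)^2V\Delta$ rather than $3(1+a+b)^2V\Delta$; this is harmless and only changes $t$ and $C$. Second, your caveat that $t\le V$ needs $V$ not too small is exactly the paper's unstated assumption ``$\gamma\le V$''. It holds automatically in the standardized setting $V=1$, since $\eps<1$.
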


\subsection{Experiments } \label{sec:experiments}

\begin{figure*}[ht]
    \centering
    \includegraphics[width=0.33\textwidth]{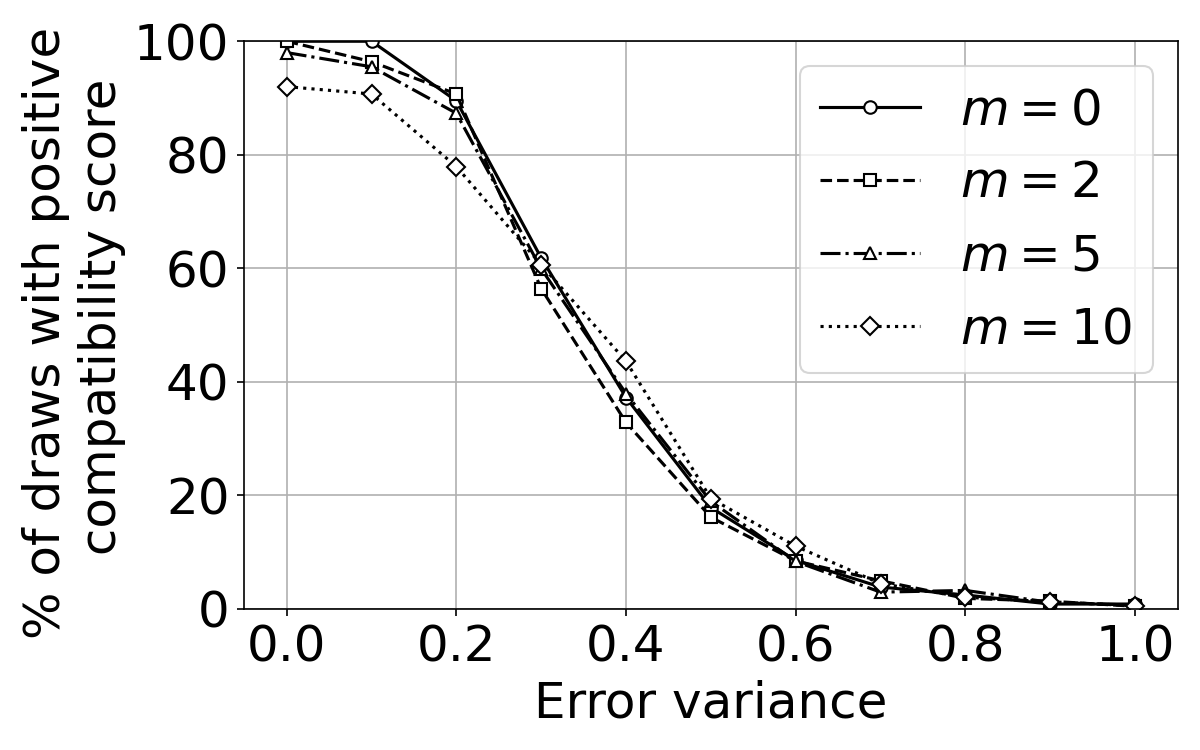}
    \includegraphics[width=0.33\textwidth]{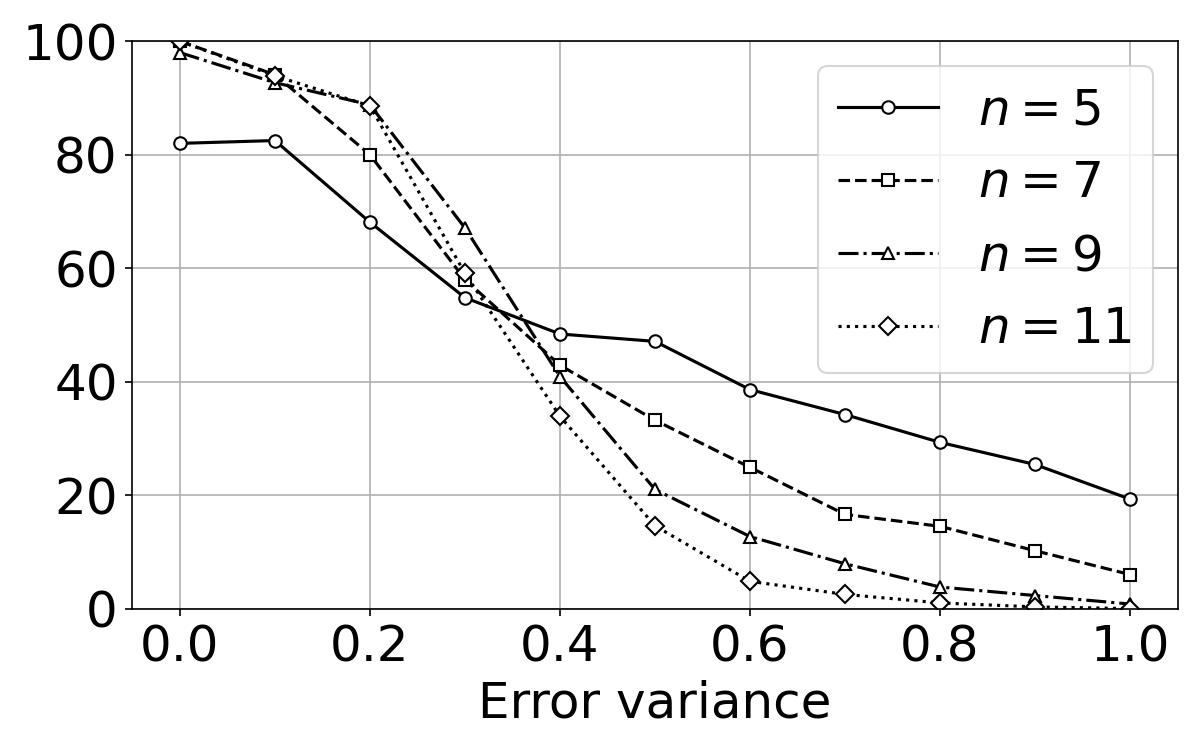}
    \includegraphics[width=0.33\textwidth]{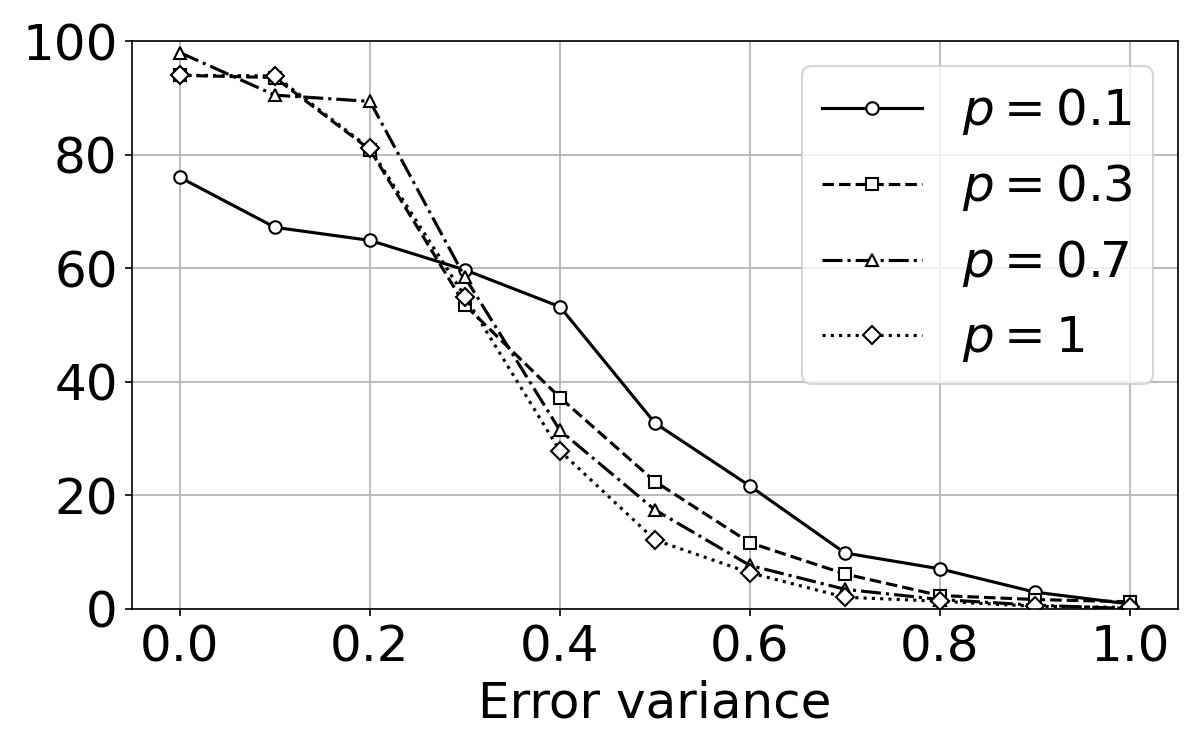}
    \caption{Percentage of positive compatibility scores for lists of bivariate causal statements on synthetic linear models with increasing amount of error. Each plot contains four different curves for four variations of a model parameter. The fixed model parameters in the first plot are $n=10$, $p=0.5$, in the second $m=3$, $p=0.5$, and in the third $n=10$, $m=3$.}
    \label{fig:synthetic}
\end{figure*} 

The source code for all our experiments can be found under \url{https://github.com/ejahn17/compatibility-scores}. First, we validate our approach using synthetic experiments. We sample random causal ground truth in the form of linear Gaussian SEMs with $n$ observed variables, $m$ hidden variables and sparsity parameter $p$, which sets causal coefficients to zero with probability $1-p$ (see Appendix~\ref{apx:synthetic}). Starting from the true bivariate causal statements implied by the ground truth, we generate lists of bivariate statements of decreasing quality by adding independent Gaussian noise with increasing variance $\sigma$ to the causal coefficients. We then compute compatibility scores for these lists (Figure~\ref{fig:synthetic}). Each data point in Figure~\ref{fig:synthetic} averages over 50 draws of different causal models and 20 draws of the random noise per model. Across all tested combinations of model parameters, the fraction of statements with positive compatibility scores strictly decreases as the error increases, which implies that our compatibility score successfully distinguishes between correct and incorrect statements on average. Our results also provide empirical evidence for the validity of Assumption~\ref{postulate}, as most of the true sets of bivariate causal statements have positive compatibility scores. We further observe that the score is robust to variations in the number of hidden variables and that it becomes more successful at distinguishing between correct and incorrect causal statements as the number of observed variables and the density of the true causal model increases. This is consistent with the intuition that both factors increase the number of observed back-door paths in the multivariate model, which are unobserved in the bivariate marginals and therefore increase the difference of the confounding scores. We also examine the differences between compatibility scores that are computed with an empirical covariance matrix based on different numbers of samples and scores computed with the true covariance matrix (see Figure~\ref{fig:sensitivity}). The results suggest that the empirical scores converge faster to the true scores in practice than the worst-case analysis in Theorem~\ref{thm:sensitivity} suggests. Moreover, less than $100$ samples are sufficient for the signs of the empirical and the true scores to match in more than $90\%$ of the test cases. See Appendix~\ref{apx:experiment-details} for additional plots for different parameter combinations. 

\begin{figure}[t]
    \centering
    \includegraphics[width=\linewidth]{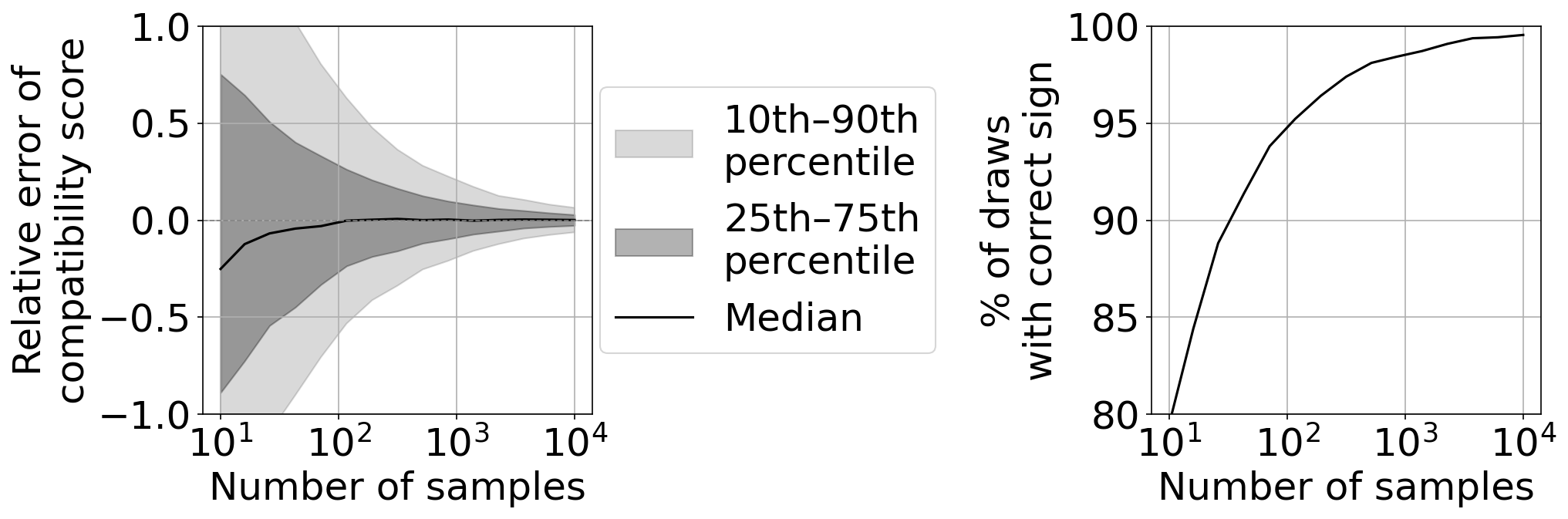}
    \caption{Relative error for empirical and true compatibility scores based on 150000 combinations of different model draws, sample draws and draws of random error in the causal statements with parameters $n=10, p = 0.5, m=3, \sigma = 0.2$.}
    \label{fig:sensitivity}
\end{figure}

In a second set of experiments, we compute compatibility scores for lists of linear bivariate causal statements produced by different large language models (LLMs) that we accessed via Amazon Bedrock. For the causal system, we consider seven variables from the gapminder dataset \citep{gapminder}: \emph{population density, literacy rate, average daily income, access to sanitation, percentage of adults that smoke, happiness score, life expectancy} (see Appendix~\ref{apx:data-models} for details). The dataset consists of country-level data points for 179 different countries over the years 1950-2025. We prompt LLMs to choose a causal ordering and estimate total linear causal effects for each pair of variables, based on variable descriptions and the empirical correlation matrix (see Appendix~\ref{apx:prompts}). We compute compatibility scores averaged over 15 independent runs for each model and compare them to a random baseline, where causal coefficients are drawn from a centered normal distribution with variance matching the variance of the LLM outputs (Figure~\ref{fig:llms}). The results show that our compatibility score can demonstrate systematic differences across models, with higher-capacity models tending to achieve higher scores. Even though the random baseline scores positively in our experiment, many LLMs still receive negative scores, in which case our approach provides evidence for the incorrectness of the generated causal statements. 

\begin{figure}[t]
    \centering
    \includegraphics[width=\linewidth]{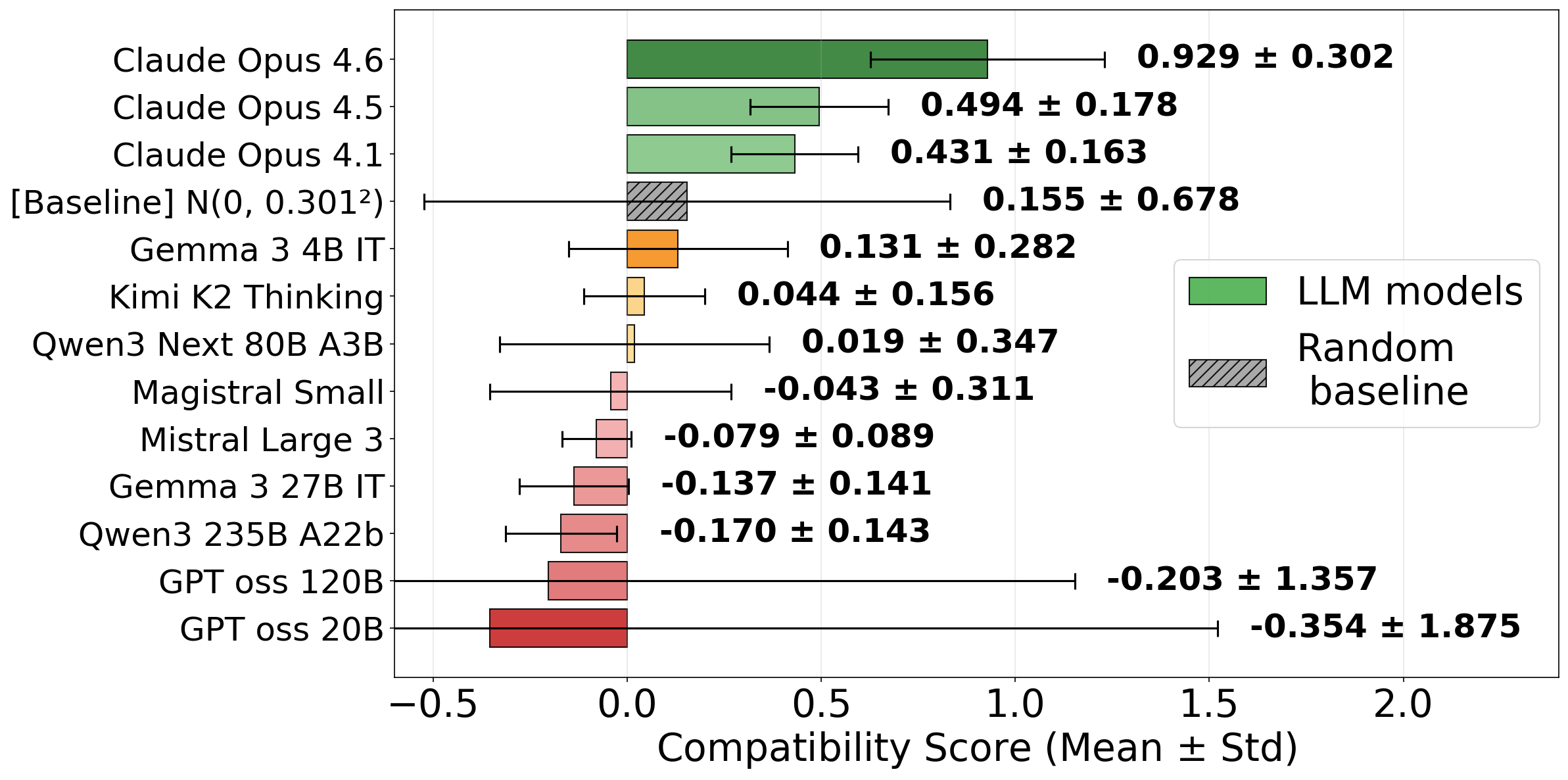}
    \caption{Compatibility scores for lists of bivariate statements obtained from different LLMs.}
    \label{fig:llms}
\end{figure}

\section{Incompatibility for Graphical Bivariate Causal Statements} \label{sec:graphical}

\subsection{Graphical Causal Statements}
When quantitative causal statements are unavailable, it can still be desirable to have qualitative causal statements about the existence and direction of causal effects. In this section, we consider acyclic directed mixed graphs (ADMGs) as our causal models. An ADMG is a \emph{mixed} graph on a set of variables $V$, that is, a graph that can have both directed edges and bidirected edges, with the restriction that the set of directed edges is acyclic. ADMGs have been widely studied as graphical causal models for causal structures with confounding \citep{VP92, CPS01}. Informally, a directed edge $v \to w$ represents a direct causal effect from $v$ to $w$, and a bidirected edge $v \leftrightarrow w$ indicates the existence of an unobserved causal backdoor path, i.e. confounding, between $v$ and $w$. The fact that ADMGs are capable of representing unobserved variables makes them closed under marginalization. 

\begin{definition}
    Let $v,w$ be vertices in an ADMG $G$. A path between $v,w$ is a \emph{confounding path} if both $v$ and $w$ are adjacent to an arrowhead of the path and no intermediate vertex is adjacent to two arrowheads (e.g. $v \leftrightarrow \to \to w$ or $v \leftarrow \leftrightarrow \to w$). 
\end{definition}

\begin{definition}[marginal ADMG, see \citet{richardson-admgs}] \label{defn:graph-marginalization}
    Let $G$ be an ADMG on the vertex set $V \cup L$, where $V$ and $L$ are disjoint. The marginal ADMG $H$ on $V$ contains all edges of $G$ within $V$, and additionally
    \begin{enumerate}
        \item a directed edge $v \to w$ if there exists a directed path from $v$ to $w$ in $G$ with all intermediate vertices in $L$;
        \item a bidirected edge $v \leftrightarrow w$ if there exists a confounding path from $v$ to $w$ in $G$ with all intermediate vertices in $L$.
    \end{enumerate}
\end{definition}

This definition is motivated by the fact that if a normal distribution is \emph{Markov} and \emph{faithful} (see Appendix~\ref{apx:background}) with respect to an ADMG $G$ on $V \cup L$, then its marginal distribution on $V$ is Markov and faithful with respect to the marginal ADMG $H$ on $V$ (see Corollaries 7.2 and 7.3 in \citet{Richardson-ADMGs-2002}). We now focus on evaluating lists of bivariate graphical causal statements. 

\begin{definition}[bivariate statements and statement graph]
    A \emph{bivariate graphical causal statement} for two variables $X_i, X_j$ specifies an ADMG on $X_i, X_j$, i.e. the existence and direction of a direct causal effect and the existence of confounding. Given a complete list of $\binom{n}{2}$ such statements for $n$ variables, we define the \emph{statement graph} $\cG$ to be the mixed graph obtained by taking the union of all bivariate ADMGs. 
\end{definition}

\citet{Faller2024} already introduced the concept of graphical compatibility for ADMGs.

\begin{definition}[graphical compatibility] \label{def:compatibility}
    A collection of ADMGs on subsets of a common vertex set $V$ is \emph{graphically compatible} if they coincide with the marginalizations of a single large ADMG on $V$. 
\end{definition}

Specifically for complete lists of bivariate ADMGs, we have the following characterization of graphical compatibility:

\begin{lemma}\label{lem:characterization}
    A list of $\binom{n}{2}$ graphical bivariate causal statements with statement graph $\cG$ is graphically compatible if and only if 
    \begin{enumerate}
        \item the directed part of $\cG$ is acyclic; \label{prop:acyclic}
        \item the directed part of $\cG$ is transitively closed, i.e. if there is a directed path from $X_i$ to $X_j$, then there must be an edge from $X_i$ to $X_j$; \label{prop:tc}
        \item if there is a confounding path between $X_i$ and $X_j$, then there must be a bidirected edge between $X_i$ and $X_j$. \label{prop:cpc}
    \end{enumerate}
\end{lemma}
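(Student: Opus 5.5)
We prove both directions separately. The main idea is that if the list is compatible at all, it is compatible with the statement graph $\cG$ itself, so $\cG$ serves as the canonical candidate.

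\emph{Necessity.} Suppose there is an ADMG $G$ on $V=\{X_1,\dots,X_n\}$ whose marginalization to each pair $\{X_i,X_j\}$ equals the $(i,j)$ statement. By Definition~\ref{defn:graph-marginalization}, the pair $\{X_i,X_j\}$ has $X_i\to X_j$ in its marginal iff $G$ has a directed path $X_i\leadsto X_j$, since all other vertices are latent for that pair. It has $X_i\leftrightarrow X_j$ iff $G$ has a confounding path between them. Hence the directed part of $\cG$ is the transitive closure of the directed part of $G$, which gives acyclicity (1) and transitive closedness (2). For (3), take a confounding path $\pi$ between $X_i,X_j$ in $\cG$. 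Replace each directed edge $u\to v$ of $\pi$ by a directed path $u\leadsto v$ in $G$, and each bidirected edge $u\leftrightarrow v$ by a confounding path in $G$. We then need a confounding path between $X_i,X_j$ in $G$, which yields the bidirected edge in $\cG$.

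\emph{Sufficiency.} Assume (1)--(3) and take $G=\cG$, which is an ADMG by (1). Marginalizing $\cG$ to $\{X_i,X_j\}$ gives $X_i\to X_j$ iff $\cG$ has a directed path $X_i\leadsto X_j$. By (2) this holds iff the edge $X_i\to X_j$ is in $\cG$, i.e.\ iff the statement asserts it. It gives $X_i\leftrightarrow X_j$ iff $\cG$ has a confounding path between them. By (3) this holds iff $X_i\leftrightarrow X_j\in\cG$; the converse direction is trivial because a bidirected edge is itself a confounding path. So every pairwise marginal equals the corresponding statement.

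\emph{Main obstacle.} The hard step is the concatenation in necessity (3): gluing the replacement walks may produce a walk that is not a path (vertices repeat) or that has an intermediate collider. Collider structure is preserved because each replacement keeps the arrowhead pattern at its endpoints. Each directed segment has an arrowhead only at its head, and each confounding path has arrowheads at both ends with no internal colliders. At each original intermediate vertex the original path had at most one arrowhead, so the glued walk has none of two. Repeated vertices are the real issue.

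To handle repetitions, I would argue directly on walks. Use the characterization that $X_i,X_j$ are connected by a confounding path iff there is some $L$, possibly $L=X_i$ or $L=X_j$ in degenerate cases, with $L\leftrightarrow$-connected structure. More concretely: there exist $u,w$ with $u\leadsto X_i$ or $u=X_i$, $w\leadsto X_j$ or $w=X_j$, and $u\leftrightarrow w$ in $G$, with the two directed paths vertex-disjoint, or else one of $X_i,X_j$ is an ancestor of the other through a trek. A confounding walk without internal colliders has exactly one bidirected edge or one ``top'' source. Taking the portion from that top to each end and shortcutting repeated vertices in each directed branch yields directed paths. If the two branches intersect, cut at the first intersection to obtain a shorter confounding path, or a directed path with an arrowhead at both ends. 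This reduction of a collider-free walk to a collider-free path is the step I expect to require the most care, including the degenerate case where the shortcut turns the trek into a pure directed path.
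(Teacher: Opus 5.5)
\textbf{There is a genuine gap: necessity of (3).} This is exactly the step you flagged, and it cannot be closed, because the ``degenerate case'' you set aside is a counterexample to the lemma itself. Let $G$ be the chain $X_1\to X_2\to X_3$ with no bidirected edges. Its pairwise marginals are $X_1\to X_2$, $X_1\to X_3$ (via the directed path through $X_2$) and $X_2\to X_3$. None has a bidirected edge: $X_1$ has no incoming edge, and the only path between $X_2$ and $X_3$ in $G$ is the edge $X_2\to X_3$. So the list is graphically compatible. Yet $\cG$ contains the confounding path $X_2\leftarrow X_1\to X_3$ and no edge $X_2\leftrightarrow X_3$, which violates (3).

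In your reduction, the two branches from the top $X_1$ are $X_1\to X_2$ and $X_1\to X_2\to X_3$. Cutting where they meet leaves $X_2\to X_3$, which has no arrowhead at $X_2$; there is no ``directed path with an arrowhead at both ends.'' The clause ``or else one of $X_i,X_j$ is an ancestor of the other'' in your auxiliary characterization is precisely where this non-confounding configuration slips in. Restricting to confounding paths that contain a bidirected edge, as the paper's examples might suggest, does not rescue the claim. Add $X_1\leftrightarrow X_2$ to the chain: then $\cG$ contains $X_2\leftrightarrow X_1\to X_3$ but still no $X_2\leftrightarrow X_3$.

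Hence the ``only if'' direction is false as stated. The paper's own proof has the same hole: it asserts that concatenating the replacement paths ``gives a confounding path between $X_i$ and $X_j$ in $G$,'' without addressing the vertex repetitions you correctly identified as the real issue.

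What does hold:
\begin{itemize}
\item Your sufficiency argument is correct and identical to the paper's, with witness $G=\cG$.
\item Your necessity proof of (1) and (2) is correct. The observation that the directed part of $\cG$ is the transitive closure of that of $G$ is cleaner than the paper's ordering argument.
\item Your cutting idea proves (3) for pairs with no directed edge between $X_i$ and $X_j$ in $\cG$. Directed walks in an acyclic graph are already paths, so the glued walk is a top (a vertex, or a bidirected edge of $G$) with directed paths down to $X_i$ and $X_j$. If these share no vertex (possible only with a bidirected top), you are done. Otherwise, take the vertex of the branch to $X_i$ closest to $X_i$ that also lies on the branch to $X_j$. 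It differs from $X_i$ and $X_j$ because neither is an ancestor of the other in $G$, so cutting there yields a genuine confounding path.
\end{itemize}
A correct characterization must therefore treat pairs joined by a directed edge differently. Then $G=\cG$ is no longer a valid witness for sufficiency, since $\cG$ itself creates spurious confounding paths such as $X_2\leftarrow X_1\to X_3$.
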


The proof is straightforward and can be found in Appendix~\ref{apx:proofs}.

\subsection{Incompatibility Score}

Let $d(\cG, \cG^*)$ denote the Hamming distance between two mixed graphs, that is the number of directed and bidirected edge deletions and additions needed to transform $\cG$ into $\cG^*$.

\begin{definition} \label{def:graphical-comp-score}
    For a statement graph $\cG$ summarizing a list of $\binom{n}{2}$ bivariate graphical causal statements, we define the \emph{incompatibility score} 
    \begin{align*}
        \incomp(\cG) = \min_{\cG^*} d(\cG, \mathcal{G}^* ),
    \end{align*}
    where the minimum ranges over all mixed graphs $\cG^*$ that satisfy the three properties stated in Lemma~\ref{lem:characterization}.
\end{definition}

There is an important distinction from the graphical incompatibility score defined by \citet{Faller2024}. While their paper assesses compatibility with a specific large ADMG, that is inferred by the same causal discovery algorithm as the respective small ADMGs, we do not assume access to an inferred multivariate ADMG. Instead, we implicitly optimize over the multivariate ADMG, by comparing $\cG$ with the closest possible compatible set of statements. However, this generalization makes the problem of computing $\incomp(\cG)$ $\mathsf{NP}$-hard. This is because it contains the following $\mathsf{NP}$-hard problem.

\textsc{Acyclic Transitivity Editing}: Given an acyclic directed graph $G$, find the minimum number of edge deletions and additions to make it transitively closed. See \citet{WELLER2012559} for a hardness proof. 

Indeed, computing $\incomp(G)$ for an acyclic directed graph after adding all possible bidirected edges is equivalent to solving the transitivity editing problem. Still, this does not mean any attempt at computing $\incomp(\cG)$ is fruitless. Intuitively, the $\mathsf{NP}$-hardness comes from regimes where $\incomp(\cG)$ is large. Indeed, a brute-force algorithm can decide in polynomial time if it is smaller than a given constant. But when $\incomp(\cG)$ is large, a precise computation is unnecessary for our purpose: any evidence for $\incomp(\cG)$ exceeding a certain threshold implies that the graph $\cG$ is highly incompatible, which is already useful for falsification. Motivated by this insight, we present a heuristic algorithm for approximating $\incomp(\cG)$. 

\subsection{Heuristic Computation of the Incompatibility Score} \label{sec:heuristics}
The exact solution for the minimization problem in Definition~\ref{def:graphical-comp-score} depends on the complex interaction between all three properties of Lemma~\ref{lem:characterization}. As a first heuristic, we decouple these constraints and address them sequentially. Given a mixed graph $\cG$ with directed part $D = D(\cG)$, we proceed as follows:

\subsubsection*{Step 1}
First, we find a small set of directed edges whose deletion makes $D$ acyclic. This is exactly the \textsc{Minimum Feedback Arc} problem, for which there is a heuristic algorithm called GreedyFAS, which runs in linear time and has good empirical performance \citep{GreedyFAS, FAS_problem_empirical}. The algorithm iteratively constructs a vertex ordering by picking sources and sinks whenever possible, otherwise, it greedily picks the vertex with maximal difference between out- and in-degree instead of a source. Its output set $E_{\mathrm{cycles}}$ is the set of edges that is not consistent with the obtained ordering (see Appendix~\ref{apx:algos} for more details).

\subsubsection*{Step 2}
Set $D' = D \setminus E_{\mathrm{cycles}}$ to be the directed acyclic graph obtained in step 1. We design a greedy, heuristic algorithm to solve the \textsc{Acyclic Transitivity Editing} problem for $D'$. Let the \emph{transitive closure} $tc(D')$ be the smallest directed graph that contains $D'$ and is transitively closed. Our algorithm GreedyTE iteratively deletes the edge that reduces the Hamming distance from the current graph to its transitive closure as much as possible, resulting in the deletion of $E_\mathrm{del} \subseteq E(D')$. Once no more reduction is possible, the algorithm adds the set of edges $E_\mathrm{add}$ that is missing to get to $tc(D' \setminus E_\mathrm{del})$ (see Appendix~\ref{apx:algos}). 

\subsubsection*{Step 3} 
After steps~1 and~2, the directed part $D$ of the statement graph $\cG$ has been transformed into a DAG $D' = (D \setminus (E_{\mathrm{cycles}} \cup E_{\mathrm{del}})) \cup E_{\mathrm{add}}$ that is transitively closed. Now, we keep $D'$ fixed and only operate on the bidirected edges of $\cG$. Define the \emph{confounding path closure} $cpc(G)$ of a mixed graph $G$ to be the graph obtained from $G$ by adding bidirected edges between each pair of vertices that is connected by a confounding path. We then apply a greedy procedure analogous to Step~2, where the transitive closure is replaced by $cpc(G)$, to identify bidirected edge sets $B_\mathrm{del}$ and $B_\mathrm{add}$ whose deletion and addition makes the resulting graph satisfy property~\ref{prop:cpc} of Lemma~\ref{lem:characterization}. Again, the full algorithm is given in Appendix~\ref{apx:algos}.

We report 
\begin{align}
    c(\cG) := |E_{\mathrm{cycles}}| + |E_{\mathrm{add}}| + |E_{\mathrm{del}}| + |B_{\mathrm{add}}| + |B_{\mathrm{del}}| \label{eq:heuristic-score}
\end{align}
as a heuristic approximation for $\incomp(\cG)$. This approximation satisfies the following properties:

\begin{lemma}\label{lem:heuristics}
    For any statement graph $\cG$, we have 
    \begin{enumerate}
        \item $c(\cG) \geq \incomp(\cG)$;
        \item $c(\cG) = 0 \iff \incomp(\cG) = 0$.
    \end{enumerate}
\end{lemma}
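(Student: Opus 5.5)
Part~1 will follow once we show that the heuristic output is a feasible graph for the minimization in Definition~\ref{def:graphical-comp-score}, and that $c(\cG)$ is exactly its Hamming distance to $\cG$ (or at least bounds it from above). Write $\cG'$ for the final graph with directed part $D' = (D \setminus (E_{\mathrm{cycles}} \cup E_{\mathrm{del}})) \cup E_{\mathrm{add}}$ and bidirected part $(B \setminus B_{\mathrm{del}}) \cup B_{\mathrm{add}}$. Then $d(\cG,\cG')$ is at most the total number of edits, which is $c(\cG)$. Equality is not needed, and overlaps such as an edge in $E_{\mathrm{cycles}}$ being re-added in $E_{\mathrm{add}}$ only make the inequality looser.

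Feasibility means checking the three properties of Lemma~\ref{lem:characterization} in turn.
\begin{enumerate}
\item \emph{Acyclicity.} After Step~1, $D\setminus E_{\mathrm{cycles}}$ is consistent with the GreedyFAS vertex ordering, so it is acyclic. Deleting $E_{\mathrm{del}}$ preserves acyclicity. Adding the transitive closure also does: a directed cycle in $tc(H)$ would give a closed walk, and hence a cycle, in $H$.
\item \emph{Transitive closure.} Step~2 ends with $tc(D'\setminus E_{\mathrm{del}})$, which is transitively closed.
\item \emph{Confounding path closure.} Step~3 keeps the directed part fixed and ends by adding all missing bidirected edges of $cpc$. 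The resulting graph satisfies property~\ref{prop:cpc} provided $cpc$ is idempotent for a fixed transitively closed directed part.
\end{enumerate}
Idempotence is the main obstacle. Adding a bidirected edge creates new confounding paths, so a single pass of ``add bidirected edges between endpoints of confounding paths'' might not be closed. I would handle this in one of two ways. The first is to define $cpc$ as iteration to a fixed point, which terminates since the number of pairs is finite. The second, which I would try first, is a direct argument. Take a confounding path that uses a newly added edge $u\leftrightarrow w$, where $u\leftrightarrow w$ was itself witnessed by a confounding path $Q$. Splice $Q$ in place of the edge. At the junctions, $u$ and $w$ receive arrowheads from $Q$'s ends but are collider-free in the original path, so they become colliders. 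Instead, observe that a confounding path is determined by two directed paths (out of a bidirected edge or out of a common source), and the directed part is transitively closed. Hence $v$ and $w$ are joined by a confounding path iff either some $a\leftrightarrow b$ has $a$ an ancestor-or-equal of $v$ and $b$ an ancestor-or-equal of $w$, or they have a common ancestor. Closure under this ancestor condition is then checked directly: a new edge $a\leftrightarrow b$ whose endpoints already satisfy the condition propagates it to the descendants of $a$ and $b$ by transitivity of ancestry.

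For Part~2, the direction $c(\cG)=0\Rightarrow \incomp(\cG)=0$ follows from Part~1 and nonnegativity. For the converse, assume $\cG$ satisfies the three properties. Step~1: since $D$ is acyclic, GreedyFAS always finds a source or sink, so the ordering it produces is a topological order and $E_{\mathrm{cycles}}=\emptyset$; this should be verified against the algorithm description in Appendix~\ref{apx:algos}. Step~2: $D'=D$ is transitively closed, so its Hamming distance to its transitive closure is already $0$, no deletion can strictly reduce it, and $E_{\mathrm{del}}=E_{\mathrm{add}}=\emptyset$. Step~3 is analogous: $\cG$ already equals its confounding path closure, so $B_{\mathrm{del}}=B_{\mathrm{add}}=\emptyset$. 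Hence $c(\cG)=0$.
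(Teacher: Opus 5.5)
Your proposal is correct and follows the paper's route: you show the heuristic output is a compatible graph within Hamming distance $c(\cG)$ of $\cG$, and for the converse you check that each step returns empty edit sets on a compatible input. It is also more careful than the paper, which asserts property~3 for the output ``by definition'' of $cpc$; your ancestor characterization actually proves that $cpc$ is idempotent once the directed part is acyclic and transitively closed. Two small fixes. First, splicing $Q$ into the path fails only because the resulting walk may repeat vertices; the junction vertices keep exactly one arrowhead, so they do not become colliders. Second, ``common ancestor'' in your characterization must mean a proper one, and the degenerate cases $a=w$ or $b=v$ then fall back into that case.
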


Hence, whenever $\incomp(\cG)$ is either zero or it is large, this reflects correctly in our heuristic approximation. We give a proof of Lemma~\ref{lem:heuristics} in Appendix~\ref{apx:proofs}.

\subsection{Experiments} \label{sec:graphical-experiments}
\begin{figure*}[ht]
    \centering
    \includegraphics[width=0.33\textwidth]{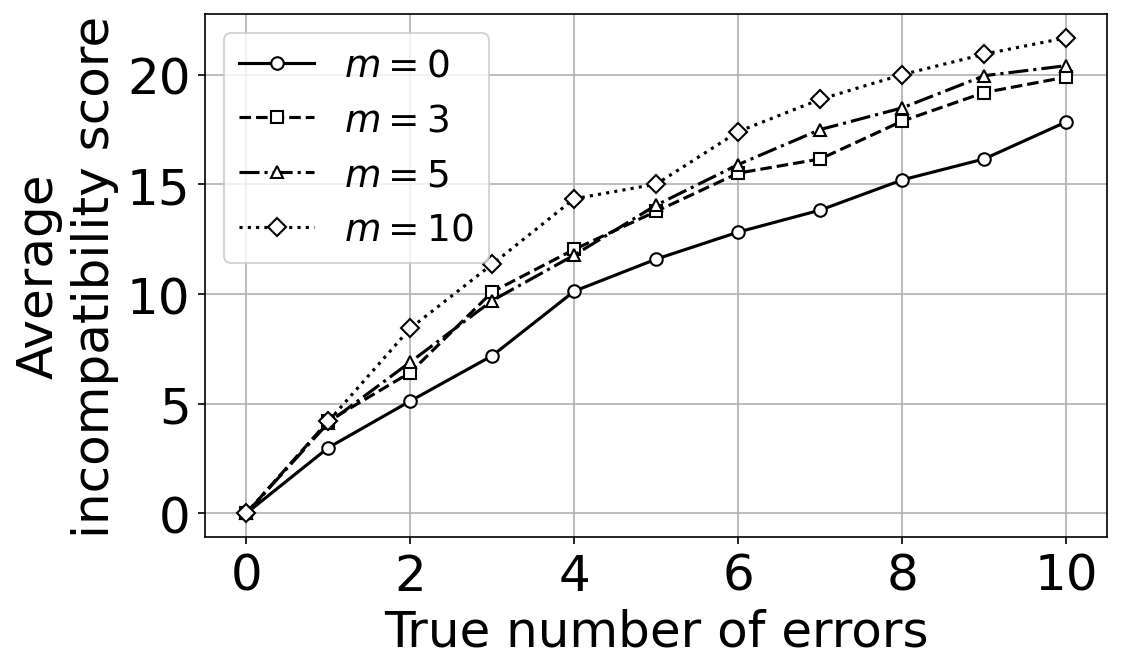}
    \includegraphics[width=0.33\textwidth]{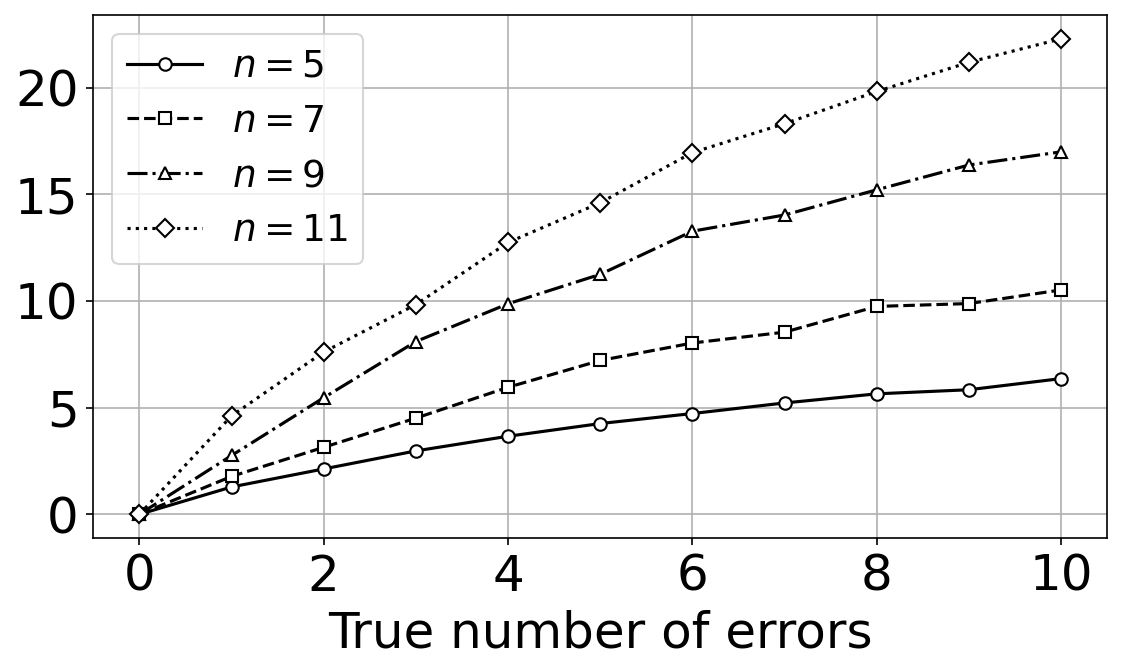}
    \includegraphics[width=0.33\textwidth]{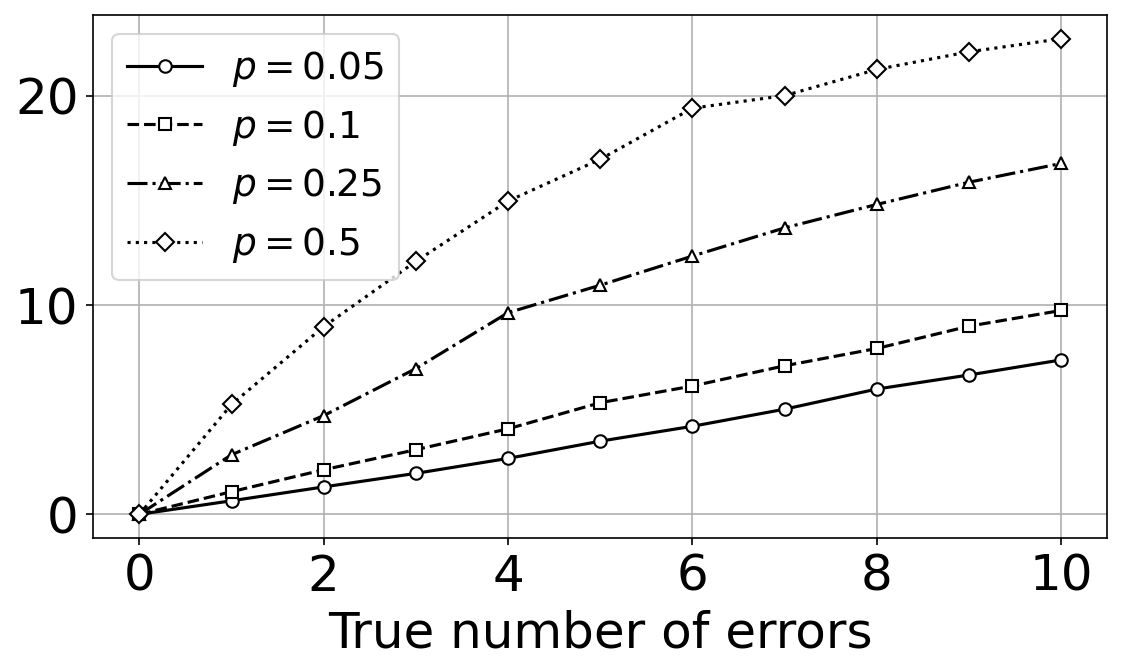}
    \caption{Average incompatibility scores for lists of bivariate causal statements on synthetic graphical models with increasing numbers of errors. The fixed parameters in the first plot are $n=10$, $p=0.3$, in the second $m=3$, $p=0.3$, and in the third $n=10$, $m=3$.}
    \label{fig:synthetic-graphs}
\end{figure*} 

Again, we start with synthetic experiments and sample causal ground truth with $n$ observed variables, $m$ hidden variables and sparsity parameter $p$ (see Appendix~\ref{apx:synthetic}). For each list of correct bivariate causal statements, we inject a fixed number of random errors, by randomly selecting an edge out of all possible directed and bidirected edges and deleting or inserting it, depending on whether it was present or absent. Then we compute incompatibility scores using the heuristic approach of Section~\ref{sec:heuristics}, see Figure~\ref{fig:synthetic-graphs}. As expected, incompatibility scores increase on average monotonically with the number of injected errors. This suggests that the incompatibility score can serve as a tool for comparing the quality of different lists of graphical bivariate causal statements for the same causal graph. For small or sparse graphs, the scores also approximate the true numbers of errors quite closely on average, and they are robust against variations in the number of hidden confounders. However, for larger and denser models, the scores tend to overestimate the  true error count, which indicates that the approximation quality of our heuristic algorithm decreases as the number of injected errors is an upper bound for the exact incompatibility score. 

We also apply our approach to graphical bivariate causal statements obtained from LLMs, using the same dataset and models as in Section~\ref{sec:experiments} (see Appendix~\ref{apx:prompts} for prompt details). The results are shown in Figure~\ref{fig:llm-results-graphical}. Among all derived statement graphs, only one achieves $\incomp(\cG) = 0$, and it is complete, meaning that it only needs to be acyclic to be compatible. Indeed, several low-scoring LLM outputs are extremely dense statement graphs, which achieve low incompatibility scores, but are not very informative\footnote{For this reason, the quality of a set of causal statements should never be judged based on compatibility alone, but also account for informativeness and  falsifiability \cite{Popper1959}. Clearly, stating an unconfounded causal link is a more specific statement - and thus more falsifiable - than allowing confounding in addition.}. This is why we also report incompatibility scores for statement graphs filtered by a capped edge density, see Figure~\ref{fig:graphical-scores-filtered}. Among this subset of more informative statement graphs, the results again reveal a correlation between low incompatibility scores and higher general model capacity. We also provide a more detailed breakdown of incompatibility scores in Appendix~\ref{apx:experiment-details}.

\begin{figure}[ht]
    \centering
    \includegraphics[width=\linewidth]{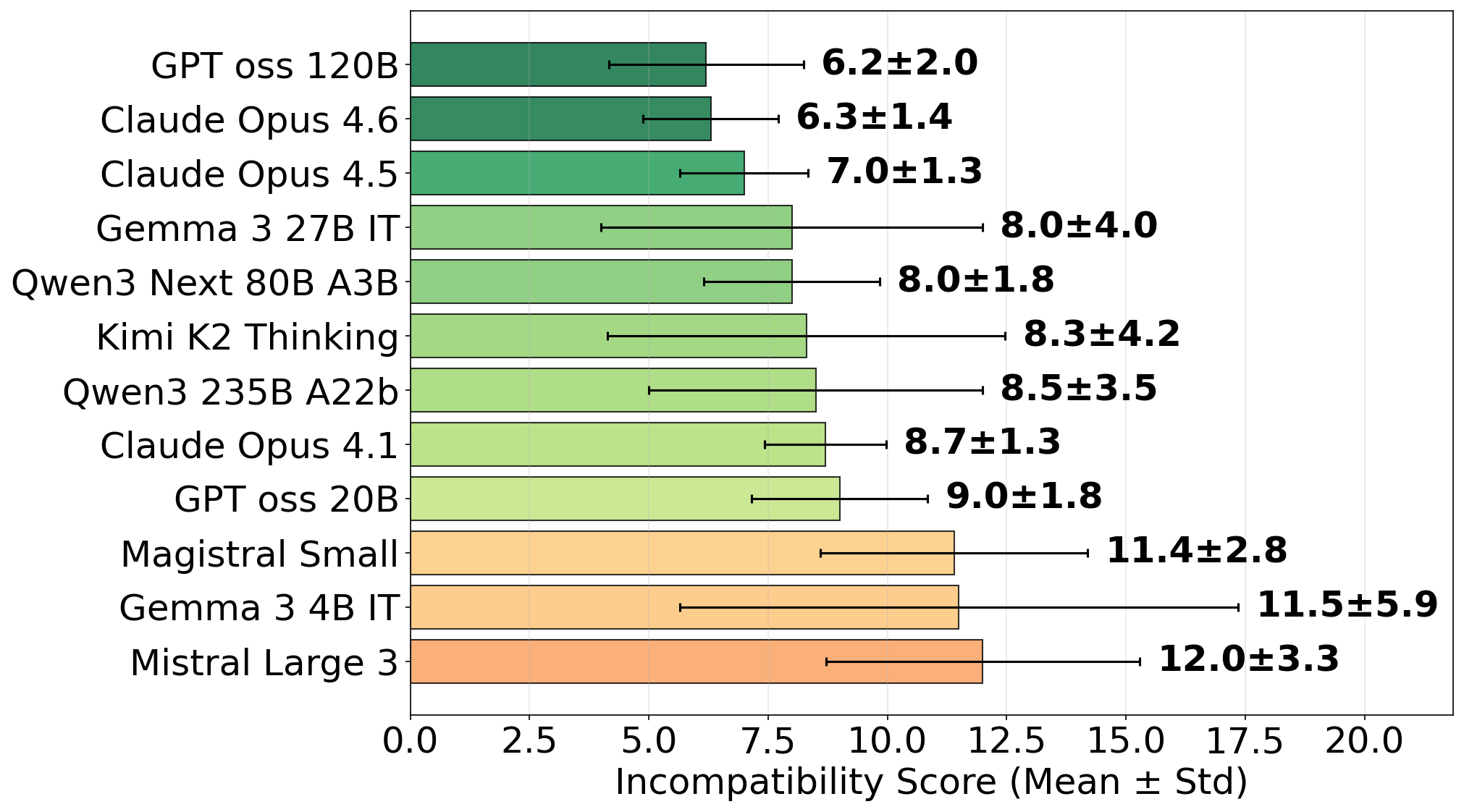}
    \caption{Incompatibility scores of statement graphs derived from LLMs, averaged over $10$ repetitions of the same query.}
    \label{fig:llm-results-graphical}
\end{figure}

\begin{figure}
    \centering
    \includegraphics[width=\linewidth]{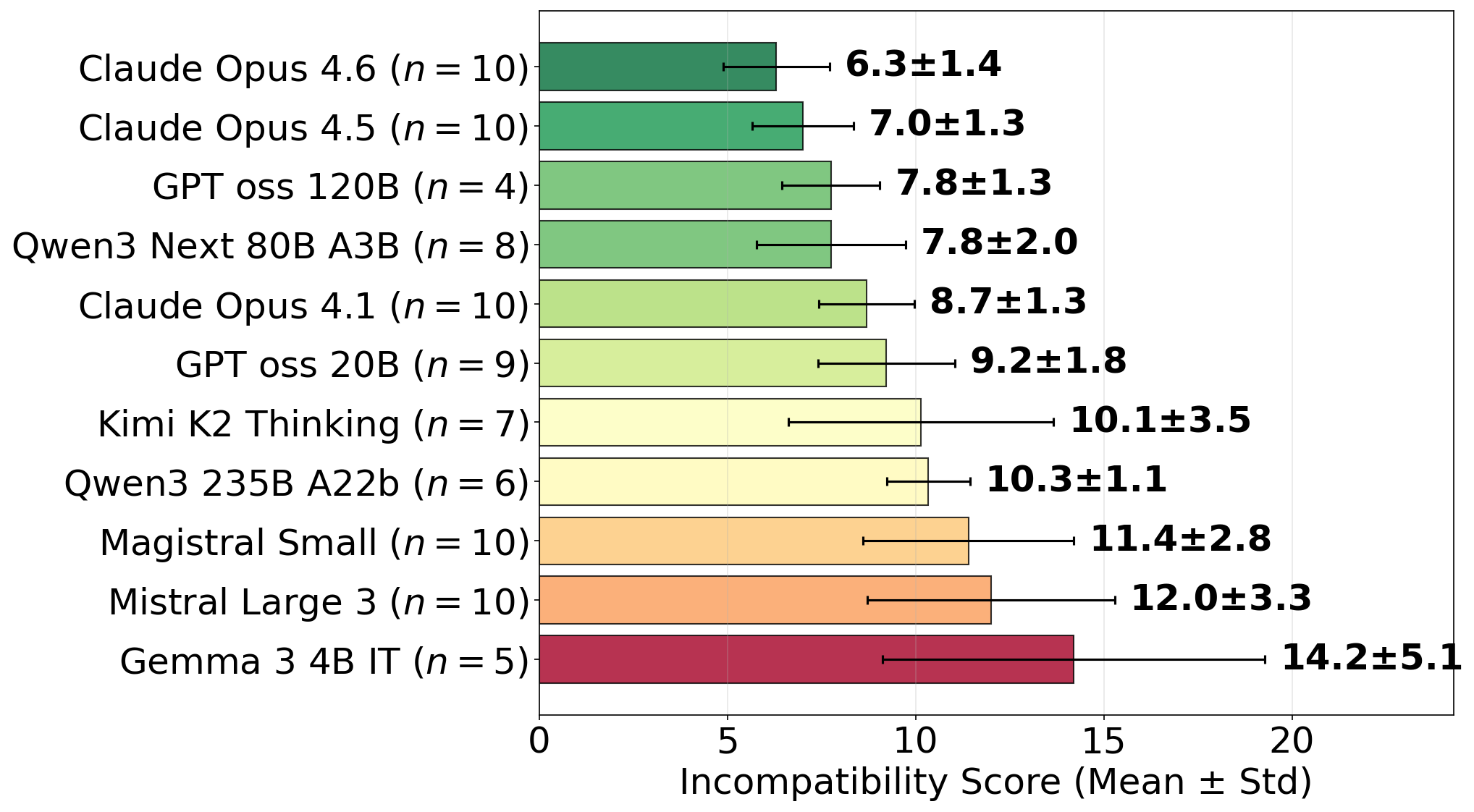}
    \caption{For each LLM, we report the size $n$ of the subset of statement graphs that have edge density at most $2/3$ (out of $10$ statement graphs in total), and plot average incompatibility scores for this subset.}
    \label{fig:graphical-scores-filtered}
\end{figure}

\section{Discussion}
We developed two complementary approaches to assess bivariate causal statements based on their mutual compatibility. For graphical statements, we approximate the minimum number of statements that need to change to give rise to a single consistent multivariate causal graph under the assumptions of acyclicity and faithfulness. In the absence of faithfulness and when quantitative linear causal statements are available, we measure compatibility by evaluating how well the unique induced linear multivariate model satisfies our plausibility criterion, namely that it should exhibit less confounding than its bivariate marginals. Importantly, a positive compatibility score does not imply correctness of the causal statements, but a negative score provides evidence for internal inconsistencies to some degree.

There is potential for future work that extends our framework to cases where the collection of bivariate causal statements may be incomplete, or accompanied by statements over small subsets of variables. Especially graphical compatibility seems amenable to such a generalization, while extending compatibility for linear statements may require new ideas, as the existence of a unique composite multivariate model can no longer be guaranteed. 

Another interesting direction would be to test the application of our compatibility scores for causal inference in the setting where incomplete lists of bivariate causal statements are already available. The missing statements could then be obtained by maximizing the compatibility score (or minimizing the incompatibility score). 

Finally, compatibility scores can provide a meaningful quantitative comparison between collections of causal claims in the absence of causal ground truth. Thus, our methods could guide the automated evaluation of causal statements, for example as part of post-training, filtering, or validation pipelines for LLMs. This could be a step towards obtaining more reliable causal information from AI models that can effectively support human decision-making. 

\section*{Impact Statement}
This paper describes an approach for assessing the reliability of AI-generated causal statements. Our goal is to enable future improvements of their accuracy. To mitigate the risk of over-interpreting compatibility scores, we want to emphasize that a high score does not provide a proof of correctness of the respective causal statements, it only indicates that they pass a sanity check.

\bibliography{refs}
\bibliographystyle{icml2026}

\newpage
\appendix
\onecolumn
\section{Additional Background} \label{apx:background}

Below, we summarize how causal graphical models such as ADMGs can be linked to probability distributions (see \citet{Richardson-ADMGs-2002}): 

\begin{definition}[Collider]
    Given a path in an ADMG $G$, a \emph{collider} is an interior vertex $x$ on the path such that the edges preceding and succeeding $x$ have arrowheads pointing to $x$ (that is, $\to x \leftarrow$ or $\leftrightarrow x \leftarrow$ or $\to x \leftrightarrow$ or $\leftrightarrow x \leftrightarrow$). 
\end{definition}

\begin{definition}[$m$-separation]
    Let $G$ be an ADMG with disjoint vertex sets $X,Y, Z$. Then, $X$ and $Y$ are said to be $m$-connected given $Z$ if there exists a path connecting a vertex in $X$ with a vertex in $Y$ such that 
    \begin{enumerate}
        \item each non-collider on the path is not in $Z$;
        \item for each collider on the path, there exists a directed path to a vertex in $Z$.
    \end{enumerate}
    If $X$ and $Y$ are not $m$-connected given $Z$, they are said to be $m$-separated given $Z$. 
\end{definition}

\begin{definition}[Markov condition]
    A probability distribution $P$ on $V$ satisfies the \emph{Markov condition} with respect to an ADMG $G$ with vertex set $V$ if for any disjoint subsets $X,Y, Z$, we have 
    \begin{align*}
        \text{$X$ and $Y$ are $m$-separated given $Z$ in $G$} \implies \text{$X$ and $Y$ are independent conditioned on $Z$ in $P$.}
    \end{align*}
\end{definition}

\begin{definition}[Faithfulness condition]
        A probability distribution $P$ on $V$ satisfies the \emph{faithfulness condition} with respect to an ADMG $G$ with vertex set $V$ if for any disjoint subsets $X,Y, Z$, we have 
    \begin{align*}
        \text{$X$ and $Y$ are independent conditioned on $Z$ in $P$} \implies \text{$X$ and $Y$ are $m$-separated given $Z$ in $G$.}
    \end{align*}
\end{definition}

The Markov and faithfulness conditions are standard assumptions for the task of inferring a causal graph from an empirical probability distribution. For linear SEMs $\bX = \Gamma \bX + \bN$ with Gaussian noise, one can show that the distribution of $\bX$ always satisfies the Markov condition with respect to the corresponding ADMG on $\bX$ that is defined by setting $X_i \to X_j$ if and only if $\Gamma_{ji} \neq 0$ and $X_i \leftrightarrow X_j$ if and only if $\Cov(N_i, N_j) \neq 0$. In contrast, the faithfulness condition is not always satisfied, even for linear Gaussian models (which is the case when the contributions of multiple $m$-connecting paths to the covariance between two variables exactly cancel), and we do not need this condition for our approach developed in Section~\ref{sec:linear}.
However, the faithfulness condition is needed to ensure that a causal graph inferred from a marginal probability distribution coincides with the marginal causal graph that is inferred from the full probability distribution. The soundness of Definition~\ref{def:compatibility} relies on this fact, and therefore our approach in Section~\ref{sec:graphical} implicitly relies on the faithfulness assumption. 

\section{Proofs} \label{apx:proofs}

{
\renewcommand{\thetheorem}{\ref{lem:unique_sem}}
\begin{lemma}[Existence of a unique compatible SEM]
Let $A$ be a unit lower-triangular matrix of linear bivariate causal statements for a vector $\bX$ of observed variables and define $\Gamma = I - A^{-1}$. Then, the multivariate structural equation model 
    \begin{align*}
        \bX = \Gamma \bX + \bN,
    \end{align*}
    where the distribution of $\bN$ is defined as the distribution of $(I-\Gamma) \bX$, is the unique linear SEM whose pairwise marginal submodels are 
    \begin{align*}
        X_j  = A_{ji} X_i + \tilde{N}_{ij}.
    \end{align*}
\end{lemma}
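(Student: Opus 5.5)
The plan is to use Lemma~\ref{lem:marginalization} to compute the causal coefficient of every bivariate marginal of a linear SEM with coefficient matrix $\Gamma$. Then show that these marginal coefficients are exactly the entries of $(I-\Gamma)^{-1}$, and that the map $\Gamma\mapsto (I-\Gamma)^{-1}$ is a bijection between strictly lower-triangular and unit lower-triangular matrices. Existence and uniqueness then follow together.

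\textbf{Step 1 (marginal coefficients are total effects).} Fix $i<j$ and marginalize to $\bY=(X_i,X_j)$, with $\bZ$ the remaining variables. Lemma~\ref{lem:marginalization} gives the bivariate coefficient matrix
\begin{align*}
M=\Gamma_{\bY\bY}+\Gamma_{\bY\bZ}(I-\Gamma_{\bZ\bZ})^{-1}\Gamma_{\bZ\bY}.
\end{align*}
Its $(j,i)$ entry should equal $[(I-\Gamma)^{-1}]_{ji}$. I would prove this with the Schur complement formula for the block inverse of $I-\Gamma$. With respect to the partition $(\bY,\bZ)$,
\begin{align*}
\bigl[(I-\Gamma)^{-1}\bigr]_{\bY\bY}=\bigl(I-M\bigr)^{-1},
\end{align*}
since $I-M$ is precisely the Schur complement of $I-\Gamma_{\bZ\bZ}$. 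Because $M$ is a $2\times 2$ strictly lower-triangular matrix (acyclicity is inherited), we have $(I-M)^{-1}=I+M$. Hence $M_{ji}=[(I-\Gamma)^{-1}]_{ji}$.

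An alternative route is to expand $(I-\Gamma)^{-1}=\sum_k\Gamma^k$ and observe that both sides equal the sum over directed paths from $i$ to $j$. The Schur complement argument is cleaner, though, and I would use it.

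\textbf{Step 2 (existence).} Set $\Gamma=I-A^{-1}$. Since $A$ is unit lower-triangular, so is $A^{-1}$, and therefore $\Gamma$ is strictly lower-triangular and defines an acyclic SEM. Defining $\bN:=(I-\Gamma)\bX=A^{-1}\bX$ makes $\bX=\Gamma\bX+\bN$ hold identically, so the model reproduces the given joint distribution. By Step 1, its $(i,j)$ marginal has coefficient $[(I-\Gamma)^{-1}]_{ji}=A_{ji}$. The marginal noise must then have the distribution of $X_j-A_{ji}X_i$, which is exactly $\tilde N_{ij}$.

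\textbf{Step 3 (uniqueness).} Suppose a linear SEM $\bX=\Gamma'\bX+\bN'$ that respects the given ordering has all the prescribed pairwise marginals. By Step 1, $[(I-\Gamma')^{-1}]_{ji}=A_{ji}$ for all $i<j$, and the diagonal entries are $1$ automatically. So $(I-\Gamma')^{-1}=A$, which gives $\Gamma'=I-A^{-1}=\Gamma$. The noise is then forced to be $\bN'=(I-\Gamma)\bX$.

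\textbf{Main obstacle.} I expect the main difficulty to be interpretive rather than computational. One has to be precise that ``pairwise marginal submodel'' means the SEM produced by Lemma~\ref{lem:marginalization}, so that its coefficient is well defined. One also has to state that uniqueness is among acyclic SEMs consistent with the given order of the observed variables, since with an unrestricted ordering the marginal coefficient would not be identified from the statement alone. The Schur complement identity itself is routine.
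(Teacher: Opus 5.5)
Your proposal is correct and matches the paper's proof essentially step for step. Like the paper, you marginalize to $(X_i,X_j)$ via Lemma~\ref{lem:marginalization} and recognize $I-M$ as the Schur complement of $I-\Gamma_{\bZ\bZ}$, so that $[(I-\Gamma)^{-1}]_{\bY\bY}=(I-M)^{-1}=I+M$; you then read off $M_{ji}=[(I-\Gamma)^{-1}]_{ji}$ and conclude that matching all pairwise marginals is equivalent to $(I-\Gamma)^{-1}=A$. Your separate existence step, your remark that the noise is forced to be $\bN=(I-\Gamma)\bX$, and your restriction to strictly lower-triangular SEMs (which the paper builds into its definition of an SEM) simply make explicit what the paper leaves implicit.
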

\addtocounter{theorem}{-1}
}

\begin{proof}
    Consider an arbitrary SEM $\bX = \Gamma \bX + \bN$. By Lemma~\ref{lem:marginalization}, marginalizing to the pair $(X_i, X_j)$ with $i < j$ results in the SEM 
    \begin{align*}
        \begin{pmatrix} X_i \\X_j \end{pmatrix} = (\Gamma_{YY} + \Gamma_{YZ} (I - \Gamma_{ZZ})^{-1} \Gamma_{ZY}) \begin{pmatrix} X_i \\X_j \end{pmatrix} + \tilde{\bN},
    \end{align*}
    where $Y = {i,j}$ and $Z = \{1, \dots, n\} \setminus \{i,j\}$. Since $\Gamma$ is strictly lower-triangular, we have that 
    \begin{align*}
        \Gamma_{YY} + \Gamma_{YZ} (I - \Gamma_{ZZ})^{-1} \Gamma_{ZY} = \begin{pmatrix}
            0 & 0\\ r & 0
        \end{pmatrix},
    \end{align*}
    for some entry $r \in \mathbb{R}$. On the other hand, block matrix inversion of $(I-\Gamma)$ using the Schur complement tells us that
    \begin{align*}
        (I - \Gamma)^{-1}_{YY} = (I- \Gamma_{YY} - \Gamma_{YZ}(I-\Gamma_{ZZ})^{-1}\Gamma_{ZY})^{-1} = \begin{pmatrix}
            1 & 0\\ -r & 1
        \end{pmatrix}^{-1} = \begin{pmatrix}
            1 & 0\\ r & 1
        \end{pmatrix}.
    \end{align*}
    This implies $r = (I - \Gamma)^{-1}_{ji}$, so the pairwise marginal submodels of an SEM with causal coefficients $\Gamma$ are given by 
    \begin{align*}
        X_j = (I - \Gamma)^{-1}_{ji} X_i + \tilde{N}_{ij}.
    \end{align*}
    Requiring that all these submodels align with $X_j = \alpha_{ij} X_i + \tilde{N}_{ij}$ is therefore equivalent to $(I- \Gamma)^{-1} = A$ or $\Gamma = I - A^{-1}$, which completes the proof. 
\end{proof}

{
\renewcommand{\thetheorem}{\ref{thm:main}}
\begin{theorem} 
    For $n \geq 3$, let $(\Gamma, \Sigma_\bN)$ specify a random $n$-dimensional linear Gaussian SEM drawn from a distribution that satisfies Assumption~\ref{asmpt:distribution}. Define $A = (I - \Gamma)^{-1}$ to be the matrix of marginal bivariate causal effects. Then, 
    \begin{align*}
        \mathbb{E}_{(\Gamma, \Sigma_\bN)}\left[\comp(\Sigma_\bX, A)\right] > 0.
    \end{align*}
\end{theorem}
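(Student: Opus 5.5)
The plan is to compare, pair by pair, the bivariate and multivariate confounding residuals. Because $A=(I-\Gamma)^{-1}$, Lemma~\ref{lem:unique_sem} says the induced multivariate model is the true one, so $I-A^{-1}=\Gamma$. For $i<j$ I write
\begin{align*}
B_{ij} := \Sigma_{ij}-A_{ji}\Sigma_{ii}, \qquad M_{ij} := \Sigma_{ij}-T_{ij},
\end{align*}
where $T_{ij}=\sum_{k\le i}\Sigma_{kk}\sum_{P_1,P_2}\Gamma^{P_1}\Gamma^{P_2}$ is the observed-path term of Definition~\ref{def:multivariate-confounding}. With $D_{ij}:=T_{ij}-A_{ji}\Sigma_{ii}$ we get $B_{ij}=M_{ij}+D_{ij}$, and therefore
\begin{align*}
\comp(\Sigma_\bX,A)=\sum_{i<j}\left(D_{ij}^2+2D_{ij}M_{ij}\right).
\end{align*}
Here $A_{ji}=\sum_{P:i\leadsto j}\Gamma^P$. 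The $k=i$ summand of $T_{ij}$ is exactly $\Sigma_{ii}A_{ji}$ and cancels. Hence $D_{ij}$ consists only of the back-door contributions $\sum_{k<i}\Sigma_{kk}\sum_{P_1,P_2}\Gamma^{P_1}\Gamma^{P_2}$. This is the quantity entering the constant $b$ in Theorem~\ref{thm:sensitivity}.

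The theorem then follows from two claims.

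\textbf{(a) Cross terms vanish in expectation:} $\E[D_{ij}M_{ij}]=0$ for every pair.

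\textbf{(b) Some squared term is strictly positive in expectation:} $\E[D_{ij}^2]>0$ for at least one pair.

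For (b) I take $(i,j)=(2,3)$, which exists since $n\ge 3$. The $k=1$ back-door term gives $D_{23}=\Sigma_{11}\Gamma_{21}\Gamma_{31}$, up to the $k=1$ paths through other vertices, which are absent here because $1<2<3$ are the first indices. Since $\Sigma_{11}=(\Sigma_\bN)_{11}>0$ almost surely and is independent of $\Gamma$, we get
\begin{align*}
\E[D_{23}^2]=\E[(\Sigma_\bN)_{11}^2]\,\E[\Gamma_{21}^2]\,\E[\Gamma_{31}^2]>0
\end{align*}
by non-degeneracy. Every other $D_{ij}^2$ is nonnegative, so the squared part has strictly positive expectation.

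For (a) I would expand both $D_{ij}$ and $M_{ij}$ as polynomials in the entries of $\Gamma$, with coefficients that are entries of $\Sigma_\bN$. Using $\Sigma_\bX=(I-\Gamma)^{-1}\Sigma_\bN(I-\Gamma)^{-\top}$, each $\Sigma_{kk}$ is itself such a polynomial. $M_{ij}$ is given by the noise-covariance term of \eqref{eq:paths}, suitably re-expressed once $\Sigma_{kk}$ replaces the noise variances. Because $\Sigma_\bN$ is independent of $\Gamma$ and the entries of $\Gamma$ are mutually independent with mean zero, a monomial has zero expectation whenever some $\Gamma$ entry appears in it to exactly the first power. So the goal for (a) is a combinatorial statement: every monomial of $D_{ij}M_{ij}$ contains some edge coefficient with odd multiplicity, in fact multiplicity one. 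Multiplicity one suffices, since odd multiplicity three or more does not force the expectation to vanish under mere unbiasedness.

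The intended mechanism has three parts.
\begin{itemize}
\item Each monomial of $D_{ij}$ ends in the two edge-disjoint path products $\Gamma^{P_1}\Gamma^{P_2}$ into $i$ and into $j$ from a common ancestor $k<i$.
\item The monomials of $M_{ij}$ pair paths starting at distinct noise sources.
\item The factor $\Sigma_{kk}$ consists of closed treks at $k$, which pair up all edges above $k$.
\end{itemize}
Under this mechanism, the last edges entering $i$ and $j$ from the $D$ factor cannot be matched by the $M$ factor except in the degenerate matched configurations.

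This parity bookkeeping is the main obstacle, because $\Sigma_{kk}$ depends on $\Gamma$ and so destroys naive multilinearity. If the clean statement fails, my fallback is to establish $\E[D_{ij}M_{ij}]=0$ by conditioning. I would condition on all entries of $\Gamma$ except one suitably chosen edge coefficient $e$, chosen as the last edge of the path $P_1$ into $i$. I would then show that $D_{ij}M_{ij}$ is affine in $e$ once $\Sigma_{kk}$ for $k<i$ is seen to be independent of $e$ (true because of the lower-triangular ordering). Its conditional expectation then vanishes by $\E[e]=0$ and independence. Summing (a) and (b) over all pairs gives $\E[\comp(\Sigma_\bX,A)]>0$.
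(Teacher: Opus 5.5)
\textbf{Gap in claim (a).} Your architecture matches the paper's. Your $D_{ij}$ is the paper's back-door term $B_{ij}$, your $M_{ij}$ is its noise-covariance term $\eps_{ij}$, and the identity $\comp=\sum_{i<j}(D_{ij}^2+2D_{ij}M_{ij})$ and your part (b) are exactly what the paper uses. The gap is in (a), which is where the real work lies and which you leave unproven. Both mechanisms you sketch use the wrong edge. The last edges into $i$ and $j$ \emph{can} be matched by the $M$ factor in perfectly generic configurations, not only degenerate ones. Take $k=1$, $i=3$, $j=5$, $P_1=1\to2\to3$, $P_2=1\to4\to5$, and the vertex-disjoint pair $Q_1=2\to3$, $Q_2=4\to5$ weighted by $(\Sigma_\bN)_{24}$. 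The resulting monomial is $\Sigma_{\bX,11}(\Sigma_\bN)_{24}\Gamma_{21}\Gamma_{41}\Gamma_{32}^2\Gamma_{54}^2$.

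\textbf{Why the fallback fails.} For the same reason, $D_{ij}M_{ij}$ is not affine in $e=\Gamma_{is}$. $D_{ij}$ is homogeneous of degree one in row $i$ of $\Gamma$, but $M_{ij}$ also contains row-$i$ entries (the last edge of $Q_1$). Hence the product has an $e^2$ coefficient, namely (coefficient of $e$ in $D_{ij}$)$\cdot$(coefficient of $e$ in $M_{ij}$), which is generically nonzero. Conditioning on all other entries and using $\E[e]=0$ therefore does not give zero. Moreover, $e$ depends on $P_1$, so no single $e$ can be chosen for the whole sum $D_{ij}$.

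\textbf{Missing idea.} Look at the \emph{first} edges out of the common source $k$, and argue summand by summand. Fix a summand
\[
\Sigma_{\bX,kk}(\Sigma_\bN)_{\ell m}\Gamma^{P_1}\Gamma^{P_2}\Gamma^{Q_1}\Gamma^{Q_2}
\]
of $\E[D_{ij}M_{ij}]$.
\begin{itemize}
\item Since $P_1,P_2$ meet only at $k$ and $k<i<j$, they leave $k$ through two distinct edges $\Gamma_{rk},\Gamma_{r'k}$ with $r,r'>k$.
\item Since $Q_1,Q_2$ are vertex-disjoint, vertex $k$ lies on at most one of them, so at most one edge out of $k$ appears in $\Gamma^{Q_1}\Gamma^{Q_2}$.
\item Hence one of $\Gamma_{rk},\Gamma_{r'k}$ occurs exactly once in the monomial.
\item $\Sigma_{\bX,kk}$ depends only on $\Sigma_\bN$ and rows $\le k$ of $\Gamma$, so this entry is independent of all other factors.
\end{itemize}
Unbiasedness then makes the summand vanish in expectation. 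You never need to expand $\Sigma_{kk}$ into closed treks; its independence from rows $>k$ is all that is used.
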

\addtocounter{theorem}{-1}
}

\begin{proof}
    For $1 \leq i < j \leq n$, we define
    \begin{align*}
        B_{ij} := \sum_{k < i} \Sigma_{\bX, kk} \cdot \sum_{\substack{P_1: k \leadsto i \\ P_2: k \leadsto j\\ P_1 \cap P_2 = \emptyset}} \Gamma^{P_1} \Gamma ^{P_2}
    \end{align*}
    to be the part of the covariance between $X_i, X_j$ coming from back-door paths through observed variables and 
    \begin{align*}
        \eps_{ij} :=  \sum_{\ell \neq k \leq j} \Sigma_{\bN, lk} \cdot \sum_{\substack{P_1: \ell \leadsto i\\  P_2: k \leadsto j \\  P_1 \cap P_2 = \emptyset}} \Gamma^{P_1} \Gamma ^{P_2}
    \end{align*}
    to be the part of the covariance between $X_i, X_j$ coming from back-door paths that are not fully observed. First, we show that $B_{ij}$ and $\eps_{ij}$ are uncorrelated. For $i=1$, we have $B_{ij} = 0$, so this is trivially true. For $i >1$, we get
    \begin{align}
        \E[B_{ij} \eps_{ij}] = \sum_{k<i} \sum_{\ell \neq m \leq j} \sum_{\substack{P_1: k \leadsto i \\ P_2: k \leadsto j\\ P_1 \cap P_2 = \emptyset}}\sum_{\substack{Q_1: \ell \leadsto i\\  Q_2: m \leadsto j \\  Q_1 \cap Q_2 = \emptyset}} \E\left[\Sigma_{\bX, kk} \cdot  \Sigma_{\bN, lm} \cdot \Gamma^{P_1}\Gamma^{P_2}\Gamma^{Q_1}\Gamma^{Q_2}\right] \label{eq:correlation}
    \end{align}
    Fix a summand in the expression above, determined by specific choices of $k,\ell, m, P_1, P_2, Q_1, Q_2$. Since $Q_1, Q_2$ are disjoint paths with all different endpoints, they cannot fully contain the paths $P_1, P_2$ as those intersect in the common endpoint $k$. Hence, there must exist an entry $\Gamma_{rs}$ with $r >k$ that only appears once in the product $\Gamma^{P_1}\Gamma^{P_2}\Gamma^{Q_1}\Gamma^{Q_2}$. Note that $\Sigma_{\bX, kk}$ only depends on $\Sigma_\bN$ and entries $\Gamma_{pq}$ with $p \leq k$. By part~\eqref{part:independence} of Assumption~\ref{asmpt:distribution}, the entry $\Gamma_{rs}$ is therefore independent from all other factors in the product $\Sigma_{\bX, kk} \cdot  \Sigma_{\bN, lm} \cdot \Gamma^{P_1}\Gamma^{P_2}\Gamma^{Q_1}\Gamma^{Q_2}$. This implies 
    \begin{align*}
        \E\left[\Sigma_{\bX, kk} \cdot  \Sigma_{\bN, lm} \cdot \Gamma^{P_1}\Gamma^{P_2}\Gamma^{Q_1}\Gamma^{Q_2}\right] = \E\left[\Gamma_{rs}\right] \cdot \E\left[\Sigma_{\bX, kk} \cdot  \Sigma_{\bN, lm} \cdot \Gamma^{P_1}\Gamma^{P_2}\Gamma^{Q_1}\Gamma^{Q_2} \cdot \frac{1}{\Gamma_{rs}}\right] = 0,
    \end{align*}
    by part~\eqref{part:bias} of Assumption~\ref{asmpt:distribution}. Since this argument holds for all summands of equation~\eqref{eq:correlation}, we get 
    \begin{align*}
        \E[B_{ij} \eps_{ij}] = 0.
    \end{align*}
    Now, recall equation~\eqref{eq:paths} stating that 
    \begin{align*}
        \Sigma_{\bX, ij} = \Sigma_{\bX, ii} \cdot A_{ji} + B_{ij} + \eps_{ij}.
    \end{align*}
    Therefore, we get for the bivariate confounding between $X_i$ and $X_j$
    \begin{align*}
        \mathcal{C}_{ij}^\text{biv}(\Sigma_\bX, A_{ji}) = (B_{ij} + \eps_{ij})^2
    \end{align*}
    and the multivariate confounding is given by 
    \begin{align*}
        \mathcal{C}_{ij}^\text{mult}(\Sigma_\bX, \Gamma) = \eps_{ij}^2.
    \end{align*}
    Hence, we can write 
    \begin{align*}
        \E[\comp(\Sigma_\bX, A)] = \E\left[\sum_{i < j} (B_{ij} + \eps_{ij})^2 - \eps_{ij}^2\right] = \sum_{i<j} \left(\E\left[B_{ij}^2 \right] + 2 \E\left[B_{ij} \eps_{ij}\right] \right) = \sum_{i<j} \E[B_{ij}^2] \geq 0.
    \end{align*}
    For $i = 2, j=3$, we get from part~\eqref{part:independence} of Assumption~\ref{asmpt:distribution}
    \begin{align*}
    \E[B_{23}^2] = \E[\Sigma_{\bX, 11}^2 \cdot \Gamma_{21}^2 \cdot \Gamma_{31}^2] = \E[\Sigma_{\bN, 11}^2 \cdot \Gamma_{21}^2 \cdot \Gamma_{31}^2] = \E[\Sigma_{\bN, 11}^2] \cdot \E[\Gamma_{21}^2] \cdot \E[\Gamma_{31}^2].
    \end{align*}
    By part~\eqref{part:nondegenerate} of Assumption~\ref{asmpt:distribution}, we have that $\Sigma_{\bN, 11}^2 > 0$ almost surely, and $\E[\Gamma_{21}^2] = \Var(\Gamma_{21}) > 0, \E[\Gamma_{31}^2] = \Var(\Gamma_{31}) > 0$. This implies $\E[B_{23}^2] > 0$ and therefore also 
    \begin{align*}
        \E[\comp(\Sigma_\bX, A)] > 0.
    \end{align*}
    
\end{proof}

{
\renewcommand{\thetheorem}{\ref{thm:sensitivity}}
\begin{theorem} 
Fix $0 < \eps, \delta < 1$, let $\bX$ be an $n$-dimensional centered Gaussian vector with covariance matrix $\Sigma$ and suppose that $\widehat{\Sigma}$ is estimated from 
\begin{align*}
    N \geq C \frac{n^4 (1+a+b)^4V^4}{\eps^2} \log \frac{n}{\delta}
\end{align*}
many iid samples, where $C$ is a universal constant. Then, with probability at least $1- \delta$, we have 
\begin{align*}
    \left| \comp(\widehat{\Sigma},A)-\comp(\Sigma,A)\right| \leq \epsilon.
\end{align*}

\end{theorem}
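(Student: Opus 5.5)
Since $A$ (and hence $\Gamma = I - A^{-1}$) is fixed and only $\Sigma$ is replaced by $\widehat{\Sigma}$, I will treat $\comp(\cdot, A)$ as a deterministic function of the covariance matrix and show it is Lipschitz-like in the entrywise max norm on a suitable event. For each pair $i<j$ write
\begin{align*}
u_{ij}(\Sigma) = \Sigma_{ij} - A_{ji}\Sigma_{ii}, \qquad w_{ij}(\Sigma) = \Sigma_{ij} - \sum_{k\le i}\Sigma_{kk}\, c_{kij},
\end{align*}
where $c_{kij} = \sum_{P_1,P_2}\Gamma^{P_1}\Gamma^{P_2}$ are constants depending only on $A$. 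Both are \emph{linear} in the entries of $\Sigma$. The $k=i$ term (empty $P_1$) contributes $\Sigma_{ii}A_{ji}$ or the direct path part, which I will bound by $a$. The $k<i$ terms are bounded in total coefficient mass by $b$. Hence, with $\Delta = \max_{p,q}|\widehat{\Sigma}_{pq}-\Sigma_{pq}|$, we get $|u_{ij}(\widehat\Sigma)-u_{ij}(\Sigma)|\le (1+a)\Delta$ and $|w_{ij}(\widehat\Sigma)-w_{ij}(\Sigma)|\le (1+a+b)\Delta$. I need to check carefully that the $k=i$ coefficient is indeed controlled by $a$ (it equals the sum over directed paths $i\leadsto j$, i.e.\ $A_{ji}$).

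Next I will use $|x^2-y^2| = |x-y|\,|x+y| \le |x-y|(2|y|+|x-y|)$. At the true $\Sigma$, $|u_{ij}(\Sigma)|\le (1+a)V$ and $|w_{ij}(\Sigma)|\le (1+a+b)V$, using $|\Sigma_{pq}|\le V$ by Cauchy--Schwarz. So on the event $\Delta\le V$, each summand of $\comp$ changes by at most $O((1+a+b)^2 V\Delta)$. Summing over $\binom n2$ pairs gives
\begin{align*}
|\comp(\widehat\Sigma,A)-\comp(\Sigma,A)| \le C' n^2 (1+a+b)^2 V \Delta .
\end{align*}
It therefore suffices to have $\Delta \le \eps/(C' n^2(1+a+b)^2 V)$, which is automatically $\le V$ in the relevant regime (otherwise I will absorb the case into constants using $\eps<1$).

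The remaining step is the concentration bound. For Gaussian $\bX$, each $X_pX_q$ is sub-exponential with $\psi_1$-norm $O(V)$. Bernstein's inequality then gives $P(|\widehat\Sigma_{pq}-\Sigma_{pq}|>t)\le 2\exp(-cN\min(t^2/V^2, t/V))$. For $t\le V$ the quadratic branch applies. A union bound over $n^2$ entries yields $\Delta\le t$ with probability $1-\delta$ once $N\ge C V^2 t^{-2}\log(n/\delta)$. Plugging in the required $t$ gives exactly $N \ge C n^4(1+a+b)^4V^4\eps^{-2}\log(n/\delta)$. The main obstacle is the bookkeeping in the first step, namely verifying that the backdoor coefficient mass is bounded by $b$ as defined. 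That definition sums absolute values over $k<i$ of the inner path sums, which matches exactly the coefficient vector multiplying the $\Sigma_{kk}$. The Bernstein step is routine.
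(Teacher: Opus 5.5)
Your argument is the paper's proof essentially step for step. It uses the same two residuals (the paper's $B_{ij}$ and $M_{ij}$). It makes the same observation that the $k=i$ term equals $A_{ji}\Sigma_{ii}$, so both residuals are linear in $\Sigma$ with coefficient $\ell_1$-mass at most $K=1+a+b$. It then applies the bound $|\Sigma_{pq}|\le V$, the difference-of-squares estimate of at most $3K^2V\Delta$ per squared residual on $\{\Delta\le V\}$, and a sum over pairs. The last ingredient is entrywise Gaussian covariance concentration: the paper quotes Ravikumar et al., Lemma 1, and your Bernstein-plus-union-bound derivation gives the same $t\le V$ branch.

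The one step that fails is your parenthetical for the case $\gamma:=\eps/(C'n^2K^2V)>V$. The assumption $\eps<1$ cannot absorb it, because $\eps$ is an absolute tolerance while $V$ can be arbitrarily small. So $\gamma\le V$, i.e.\ $\eps\lesssim n^2K^2V^2$, is a genuine restriction. The paper makes the same choice of $\gamma$ and uses $\gamma\le V$ without comment, and the statement really does need it. Take $n=3$, $\Sigma=VI$, $A_{21}=A_{31}=1$, $A_{32}=0$, so $a=b=1$ and $K=3$. Then $\comp(\widehat\Sigma,A)=2\widehat\Sigma_{11}\widehat\Sigma_{23}-\widehat\Sigma_{11}^2$ and $\comp(\Sigma,A)=-V^2$. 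If $V^4/\eps^2=1/(6561\,C\log(3/\delta))$, the stated bound admits $N=1$. Yet the deviation $|X_1^4-2X_1^2X_2X_3-V^2|$ exceeds $\eps$ with probability at least of order $\exp(-c\sqrt{\eps}/V)=\exp(-c'(C\log(3/\delta))^{1/4})$, which is larger than $\delta$ once $\delta$ is small.

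There are two ways to repair this.
\begin{itemize}
\item Assume $V\ge 1$, e.g.\ standardized variables, which is the paper's practical setting. Then $\gamma\le V$ is automatic, since $\eps<1\le 3n^2K^2V^2$.
\item Add $N\ge C\log(n/\delta)$ to the hypothesis. When $\gamma>V$, work on the event $\{\Delta\le V\}$ instead; the deviation is then at most $3n^2K^2V^2<\eps$.
\end{itemize}
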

\addtocounter{theorem}{-1}
}

\begin{proof}
Since $X$ is centered Gaussian and $\max_i \Sigma_{ii}\leq V$, a standard concentration result (see~\citet{ravikumar_high-dimensional_2011}, Lemma 1) shows that there exists a universal constant $c>0$ such that, for all $0<\gamma\leq V$,
\begin{align*}
    \Pr\left[
        \max_{p,q}
        |\widehat{\Sigma}_{pq}-\Sigma_{pq}|>\gamma
    \right]
    \leq
    4n^2\exp\left(
        -cN\frac{\gamma^2}{V^2}
    \right).
\end{align*}

Let $\mathcal{E}_\gamma$ denote the event
\begin{align*}
    \mathcal{E}_\gamma := \left\{\max_{p,q}|\widehat{\Sigma}_{pq}-\Sigma_{pq}| \leq \gamma \right\}.
\end{align*}

Conditional on $\mathcal{E}_\gamma$, we prove a bound on $| \comp(\widehat{\Sigma},A)-\comp(\Sigma,A)|$. For $i<j$ and fixed $A$, define the bivariate residual
\begin{align*}
    B_{ij}(\Sigma) := \Sigma_{ij}-A_{ji}\Sigma_{ii},
\end{align*}
and the multivariate residual
\begin{align*}
    M_{ij}(\Sigma) := \Sigma_{ij}-\sum_{k\leq i}\Sigma_{kk}
    \sum_{\substack{
        P_1:k\leadsto i,\;P_2:k\leadsto j\\
        P_1\cap P_2=\emptyset
    }}\Gamma^{P_1}\Gamma^{P_2},
\end{align*}
where $\Gamma = I - A^{-1}$. The term corresponding to $k=i$ equals $A_{ji}\Sigma_{ii}$, because it sums over all directed paths from $i$ to $j$. Hence
\begin{align*}
    M_{ij}(\Sigma) = \Sigma_{ij}-A_{ji}\Sigma_{ii}-\sum_{k<i}\Sigma_{kk}
    \sum_{\substack{
        P_1:k\leadsto i,\;P_2:k\leadsto j\\
        P_1\cap P_2=\emptyset
    }}\Gamma^{P_1}\Gamma^{P_2}.
\end{align*}
Let $K := 1 + a + b$ with $a,b$ as defined in Section~\ref{sec:theory}. Then, $B_{ij}$ and $M_{ij}$ are linear functions of the entries of $\Sigma$ with coefficient $\ell_1$-norm at most $K$. On the event $\mathcal{E}_\gamma$, this implies 
\begin{align*}
    |B_{ij}(\widehat{\Sigma})-B_{ij}(\Sigma)|\leq K\gamma, 
    \qquad
    |M_{ij}(\widehat{\Sigma})-M_{ij}(\Sigma)|\leq K\gamma.
\end{align*}
Since $|\Sigma_{pq}|\leq V$ for all $p,q$ by Cauchy--Schwarz, we also have
\begin{align*}
    |B_{ij}(\Sigma)|,\ |M_{ij}(\Sigma)| \leq KV,
\end{align*}
and on $\mathcal{E}_\gamma$, assuming $\gamma \leq V$, we have $|\widehat{\Sigma}_{pq}|\leq 2V$, and therefore
\begin{align*}
    |B_{ij}(\widehat{\Sigma})|,\ |M_{ij}(\widehat{\Sigma})|\leq 2KV.
\end{align*}
Using $|x^2-y^2|=|x-y||x+y|$, we get
\begin{align*}
    \left|\mathcal{C}_{ij}^\text{biv}(\widehat{\Sigma}, A_{ji})-\mathcal{C}_{ij}^\text{biv}(\Sigma, A_{ji})\right| = \left|B_{ij}(\widehat{\Sigma})^2 - B_{ij}(\Sigma)^2\right| \leq K \gamma \cdot 3KV = 3K^2V \gamma,
\end{align*}
and similarly, 
\begin{align*}
    \left|\mathcal{C}_{ij}^\text{mult}(\widehat{\Sigma}, \Gamma) - \mathcal{C}_{ij}^\text{mult}(\Sigma, \Gamma)\right|= \left|M_{ij}(\widehat{\Sigma})^2 - M_{ij}(\Sigma)^2\right| \leq 3K^2V \gamma.
\end{align*}
Thus, for each pair $i<j$, we obtain
\begin{align*}
    \left|\left(\mathcal{C}_{ij}^\text{biv}(\widehat{\Sigma}, A_{ji}) - \mathcal{C}_{ij}^\text{mult}(\widehat{\Sigma}, \Gamma)\right) - \left(\mathcal{C}_{ij}^\text{biv}(\Sigma, A_{ji}) - \mathcal{C}_{ij}^\text{mult}(\Sigma, \Gamma)\right)\right| \leq 6K^2V\gamma.
\end{align*}
Summing over at most $n^2/2$ pairs gives
\begin{align*}
    \left| \comp(\widehat{\Sigma},A)-\comp(\Sigma,A)\right| \leq 3n^2K^2V\gamma = \eps,
\end{align*}
where the last step follows by choosing $\gamma = \frac{\epsilon}{3n^2K^2V}$. Finally, the Gaussian covariance concentration bound shows that $\mathcal{E}_\gamma$ holds with probability at least $1-\delta$ whenever
\begin{align*}
    N \geq C'\frac{V^2}{\gamma^2} \log \frac{n}{\delta} = C \frac{n^4K^4V^4}{\eps^2}\log\frac{n}{\delta}
\end{align*}
for some sufficiently large constant $C$. This completes the proof. 
\end{proof}

{
\renewcommand{\thetheorem}{\ref{lem:characterization}}
\begin{lemma}
    A list of $\binom{n}{2}$ graphical bivariate causal statements with statement graph $\cG$ is graphically compatible if and only if 
    \begin{enumerate}
        \item the directed part of $\cG$ is acyclic;
        \item the directed part of $\cG$ is transitively closed, i.e. if there is a directed path from $X_i$ to $X_j$, then there must be an edge from $X_i$ to $X_j$;
        \item if there is a confounding path between $X_i$ and $X_j$, then there must be a bidirected edge between $X_i$ and $X_j$.
    \end{enumerate}
\end{lemma}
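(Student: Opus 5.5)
We prove the two directions separately, working directly from Definition~\ref{defn:graph-marginalization} with $V=\{X_i,X_j\}$ and $L$ the remaining $n-2$ variables.

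\emph{Necessity.} Suppose there is an ADMG $G$ on all $n$ variables whose pairwise marginals are the given bivariate statements. The marginal on $\{X_i,X_j\}$ contains $X_i\to X_j$ iff $G$ has a directed path from $X_i$ to $X_j$ (any intermediate vertices automatically lie in $L$). It contains $X_i\leftrightarrow X_j$ iff $G$ has a confounding path between them. Hence the directed part of $\cG$ is the transitive closure of the directed part of $G$. It is therefore transitively closed, giving property~\ref{prop:tc}, and acyclic, since $G$ is acyclic; this gives property~\ref{prop:acyclic}.

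For property~\ref{prop:cpc}, take a confounding path between $X_i$ and $X_j$ in $\cG$. Replace each directed edge $u\to w$ of $\cG$ by a directed path $u\to\cdots\to w$ in $G$. Replace each bidirected edge $u\leftrightarrow w$ of $\cG$ by a confounding path in $G$. The resulting walk in $G$ still has arrowheads at both ends and no interior collider, because confounding paths have no interior colliders and concatenation preserves the endpoint marks. This walk may repeat vertices, however, so we must shorten it to an honest confounding path between $X_i$ and $X_j$. Equivalently, we need the characterization that $G$ has a confounding path between $u,w$ iff there is a vertex $t$ (possibly $u$ or $w$ itself) with $t\to\cdots\to u$ and $t\to\cdots\to w$ directed, or a bidirected edge $s\leftrightarrow t$ with $s\leadsto u$ and $t\leadsto w$, where these directed paths can be chosen vertex-disjoint apart from where required. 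I would establish this ``trek'' characterization and then compose treks. Composing two treks through a shared endpoint variable, where one trek points into it and the other leaves it via a directed path, again yields a trek. Repeated vertices are removed by short-cutting at the first intersection of the two directed legs.

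\emph{Sufficiency.} Suppose $\cG$ satisfies properties~\ref{prop:acyclic}--\ref{prop:cpc}. Take $G:=\cG$ itself, which is an ADMG by property~\ref{prop:acyclic}. Its marginal on $\{X_i,X_j\}$ contains $X_i\to X_j$ iff $\cG$ has a directed path from $X_i$ to $X_j$, which by property~\ref{prop:tc} holds iff $X_i\to X_j\in\cG$. It contains $X_i\leftrightarrow X_j$ iff $\cG$ has a confounding path between them. By property~\ref{prop:cpc}, such a path implies the edge is in $\cG$. Conversely, a bidirected edge is itself a confounding path. So every pairwise marginal equals the corresponding bivariate statement, and the list is compatible.

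The main obstacle is the path-shortening step in the necessity direction for property~\ref{prop:cpc}. We must make sure that the concatenated walk of confounding paths yields a genuine path with no interior collider, handling vertex repetitions without creating colliders. The trek reformulation is the tool I would rely on for this.
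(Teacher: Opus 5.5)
Your sufficiency direction is correct and matches the paper. So is your derivation of properties 1 and 2: the directed part of $\mathcal{G}$ is exactly the transitive closure of the directed part of $G$. The gap is the step you flag for property 3, and it cannot be closed, because property 3 is not necessary.

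Take $G$ to be the chain $X_1\to X_2\to X_3$. Its pairwise marginals are $X_1\to X_2$, $X_2\to X_3$ and $X_1\to X_3$, with no bidirected edges, since the only path between $X_1$ and $X_3$ has a tail at $X_1$. This list is therefore graphically compatible by definition. Yet in $\mathcal{G}$ the path $X_2\leftarrow X_1\to X_3$ is a confounding path, while $X_2\leftrightarrow X_3$ is absent. Lifting it to $G$ gives the walk $X_2\leftarrow X_1\to X_2\to X_3$, and $G$ contains no confounding path between $X_2$ and $X_3$ at all.

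Your trek characterization hides this. Allowing the top $t$ to be $u$ or $w$ counts every directed path between $u$ and $w$ as a confounding path, which the definition excludes because it requires arrowheads at both ends. Once you enforce $t\notin\{u,w\}$, short-cutting the two legs can land on $u$ itself whenever $u$ lies on the leg to $w$, exactly as here. The paper's proof has the same hole: it simply asserts that the concatenation ``gives a confounding path''. The ``only if'' direction of the lemma is false as stated.

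Your short-cutting idea does prove a restricted version. Suppose neither of $X_i,X_j$ is an ancestor of the other in $G$, i.e.\ they are not joined by a directed edge in $\mathcal{G}$. Write the lifted walk as two directed legs (paths, by acyclicity) issuing from a common top or from the two ends of a bidirected edge. If the legs meet, cut at the last vertex $z$ of the $X_j$-leg that lies on the $X_i$-leg. Then $z\notin\{X_i,X_j\}$, since otherwise one endpoint would be an ancestor of the other, and you obtain a genuine confounding path. So property 3 is forced only for pairs not joined by a directed edge.

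Repairing the lemma therefore requires weakening property 3 for pairs joined by a directed edge. That breaks the sufficiency witness $G=\mathcal{G}$: in the example, $\mathcal{G}$ itself marginalizes to $X_2\leftrightarrow X_3$. The sufficiency direction would then have to be redone with a sparser witness.
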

\addtocounter{theorem}{-1}
}

\begin{proof}
    First, suppose the statement graph $\cG$ satisfies the three conditions of Lemma~\ref{lem:characterization}. By the first condition, $\cG$ is an ADMG itself. By the second condition, all its directed edges coincide with the directed edges of its pairwise marginalizations and by the third condition, all its bidirected edges coincide with the bidirected edges of its pairwise marginalizations. Hence, $\cG$ itself is a multivariate ADMG that is compatible with all bivariate statements. \\
    Secondly, suppose the statement graph $\cG$ is graphically compatible, so there exists a multivariate ADMG $G$ whose pairwise marginalizations coincide with the edges of $\cG$. Assume that all vertices are ordered consistently with respect to the directed edges of $G$. Then, also the directed edges of the pairwise marginalizations of $G$ (which are obtained from directed paths), must be consistent with the vertex ordering. Hence, the directed part of $\cG$ is acyclic, showing that the first condition of Lemma~\ref{lem:characterization} holds. Next, suppose there is a directed path from $X_i$ to $X_j$ in $\cG$. Every directed edge $X_k \to X_\ell$ in the path corresponds to either a directed edge $X_k \to X_\ell$ in $G$ or a directed path from $X_k$ to $X_\ell$. Concatenating all these directed paths gives a directed path from $X_i$ to $X_j$ in $G$. Hence, marginalizing $G$ onto the vertex set $\{X_i, X_j\}$ results in an edge $X_i \to X_j$, which proves that $\cG$ satisfies the second property of Lemma~\ref{lem:characterization}. Finally, assume that there is a confounding path between $X_i, X_j$ in $\cG$. As before, each directed edge on this path must correspond to a directed path in $G$ and if there is a bidirected edge on the path, then it must correspond to a confounding path in $G$. Concatenating these directed paths and the confounding path, gives a confounding path between $X_i$ and $X_j$ in $G$, which implies the existence of a bidirected edge $X_i \leftrightarrow X_j$ in the pairwise marginal ADMG. This shows that $\cG$ also satisfies the third property of Lemma~\ref{lem:characterization} and completes the proof. 
\end{proof}

{
\renewcommand{\thetheorem}{\ref{lem:heuristics}}
\begin{lemma}
    For any statement graph $\cG$, we have 
    \begin{enumerate}
        \item $c(\cG) \geq \incomp(\cG)$ \label{part-1};
        \item $c(\cG) = 0 \iff \incomp(\cG) = 0$. \label{part-2}
    \end{enumerate}
\end{lemma}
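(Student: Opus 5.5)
The plan for part 1 is to show that the heuristic produces a feasible competitor in the minimization defining $\incomp(\cG)$. Let $\cG^*$ be the mixed graph obtained from $\cG$ by replacing its directed part $D$ with $D' = (D \setminus (E_{\mathrm{cycles}} \cup E_{\mathrm{del}})) \cup E_{\mathrm{add}}$, and its bidirected part $B$ with $(B \setminus B_{\mathrm{del}}) \cup B_{\mathrm{add}}$. I would check the three properties of Lemma~\ref{lem:characterization} in turn.

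\emph{Acyclicity.} GreedyFAS returns exactly the edges inconsistent with a vertex ordering, so $D \setminus E_{\mathrm{cycles}}$ is acyclic. Deleting further edges preserves this. The added set $E_{\mathrm{add}}$ consists of edges of $tc(D\setminus(E_{\mathrm{cycles}}\cup E_{\mathrm{del}}))$, and the transitive closure of a DAG is again a DAG.

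\emph{Transitive closure.} By construction, $D'$ equals the transitive closure of $D \setminus (E_{\mathrm{cycles}}\cup E_{\mathrm{del}})$, so it is transitively closed.

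\emph{Confounding path closure.} Step~3 keeps $D'$ fixed and ends by adding every bidirected edge missing from the confounding path closure. The point to verify is that adding a bidirected edge between endpoints of an existing confounding path creates no new confounding path whose endpoints lack a bidirected edge, i.e.\ that $cpc$ is idempotent. I would argue this as follows. Take a confounding path that uses a newly added edge $u\leftrightarrow w$. Replace that edge by the original confounding path from $u$ to $w$; the result is a walk whose endpoints are still arrowhead-adjacent and whose interior has no collider. Then shorten the walk to a path while preserving these properties. This shortening is the main obstacle: collider-freeness must survive the removal of repeated vertices, and if it fails I would fall back on defining the closure by iterating until a fixed point, since Step~3 as described adds the missing edges of $cpc$ of the current graph.

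Once $\cG^*$ is feasible, the bound follows because the edge sets $E_{\mathrm{cycles}}, E_{\mathrm{del}}, E_{\mathrm{add}}, B_{\mathrm{del}}, B_{\mathrm{add}}$ together account for every change from $\cG$ to $\cG^*$. Hence $d(\cG,\cG^*) \le c(\cG)$, with inequality only if some edge is deleted and later re-added, and so $\incomp(\cG)\le c(\cG)$.

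For part 2, the direction $c(\cG)=0 \Rightarrow \incomp(\cG)=0$ follows from part 1 and nonnegativity. Conversely, suppose $\incomp(\cG)=0$, so $\cG$ satisfies the three properties. Then $D$ is acyclic, and GreedyFAS picks a source whenever one exists, so it produces a topological order and $E_{\mathrm{cycles}}=\emptyset$. Next, $D$ is transitively closed, so its Hamming distance to its transitive closure is already $0$; no deletion can strictly reduce it, so GreedyTE deletes nothing and $E_{\mathrm{del}}=E_{\mathrm{add}}=\emptyset$. The same argument applied to $cpc$ in Step~3 gives $B_{\mathrm{del}}=B_{\mathrm{add}}=\emptyset$. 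Therefore $c(\cG)=0$.
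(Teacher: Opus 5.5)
Your plan is the paper's proof. You exhibit the heuristic output $\cG^*$ as a feasible point by checking the three properties of Lemma~\ref{lem:characterization}, and you get part 2 from part 1 plus the fact that every edit set is empty on a compatible graph; your gain-formula justification of that fact is more explicit than the paper's. The paper settles property 3 with ``by definition, $B_{\mathrm{add}}$ completes the confounding path closure'', so you are right that an argument is owed there. But your proposal leaves that step open, and the walk-shortening you sketch fails in the generality you state it. For an arbitrary directed part, the one-shot closure is not idempotent. Take edges $s\to u$, $s\to v$, $u\to v$, $v\to w$ and no bidirected edges. The confounding path $u\leftarrow s\to v\to w$ puts $u\leftrightarrow w$ into $cpc(G)$, which creates the confounding path $v\leftarrow u\leftrightarrow w$. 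Yet $v\leftrightarrow w$ is never added, because $w$'s only parent is $v$. Your substitution yields the walk $v\leftarrow u\leftarrow s\to v\to w$, and removing the repeated $v$ leaves $v\to w$, which has lost its arrowhead at $v$. Your fallback of iterating to a fixed point changes the algorithm, so it would not bound $c(\cG)$ as defined.

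The missing idea is that Step 3 runs with the directed part frozen at $D'$, which is acyclic \emph{and transitively closed}. A collider-free walk with arrowheads at both ends $a,b$ must read $a\leftarrow\cdots\leftarrow t\to\cdots\to b$ or $a\leftarrow\cdots\leftarrow t_1\leftrightarrow t_2\to\cdots\to b$, since after an arrowhead the next edge must leave that vertex by a tail. Transitivity collapses each directed leg to one edge, giving $a\leftarrow t\to b$ or $a\leftarrow t_1\leftrightarrow t_2\to b$ (with $t_1=a$ or $t_2=b$ allowed). Acyclicity rules out $t\in\{a,b\}$, and rules out $t_1=b$ and $t_2=a$ holding together. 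If $t_1=b$, then $t_2\to b\to a$ gives $t_2\to a\in D'$ and the confounding path $a\leftarrow t_2\leftrightarrow b$; the case $t_2=a$ is symmetric. So after your substitution the endpoints are joined in $G'$ by a confounding path (or by $a\leftrightarrow b$ itself), hence by a bidirected edge in $cpc(G')$. Thus $cpc(G')$ satisfies property 3, one-shot and iterated closure coincide in this setting, and the rest of your argument goes through.
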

\addtocounter{theorem}{-1}
}

\begin{proof}
    For part~\eqref{part-1} of Lemma~\ref{lem:heuristics}, it suffices to show that $\cG^* = \cG \setminus (E_{\text{cycles}}\cup E_{\text{del}} \cup B_{\text{del}}) \cup E_{\text{add}} \cup B_{\text{add}}$ is graphically compatible. Let $\cG' := \cG \setminus (E_{\text{cycles}}\cup E_{\text{del}})$. By definition, $B_{\text{add}}$ completes the confounding path closure of $\cG' \setminus B_{\text{del}} \cup E_{\text{add}}$. Hence, $\cG^*$ satisfies property~\ref{prop:cpc} of Lemma~\ref{lem:characterization}. Similarly, by definition of $E_{\text{add}}$, the graph $\cG' \cup E_{\text{add}}$ is the directed transitive closure of $\cG'$. This property cannot be destroyed by adding or deleting bidirected edges, which implies that $\cG^*$ is also transitively closed. Finally, we know that by definition of $E_{\text{cycles}}$, the graph $\cG \setminus E_{\text{cycles}}$ is acyclic. This implies that $\cG'$ is also acyclic. Since the directed transitive closure of an acyclic graph is still acyclic, we know that $\cG' \cup E_{\text{add}}$ is acyclic. Again, acyclicity cannot be destroyed by adding or removing bidirected edges, which implies that $\cG^*$ is acyclic.

    To show part~\ref{part-2} of Lemma~\ref{lem:heuristics}, note that whenever $\incomp(\cG) = 0$, i.e. $\cG$ is graphically compatible, we must have $E_{\text{cycles}} = E_{\text{add}} = E_{\text{del}} = B_{\text{add}} = B_{\text{del}} = \emptyset$, and hence $c(\cG) = 0$. The converse follows from part~\ref{part-1}.
\end{proof}

\section{Algorithms} \label{apx:algos}

In the description of Algorithm~\ref{alg:greedyfas}, $d^{\mathrm{in}}_H(v)$ denotes the in-degree of a vertex $v$ with respect to the graph $H$ and similarly, $d^{\mathrm{out}}_H(v)$ denotes the out-degree. In the description of Algorithm~\ref{alg:greedyte}, $tc(H)$ denotes the transitive closure of $H$ and in the description of Algorithm~\ref{alg:greedycpc}, $cpc(H)$ denotes the confounding path closure of $H$ (see Section~\ref{sec:heuristics}).

\begin{algorithm}[H]
  \caption{Greedy Feedback Arc Set (GreedyFAS, \citet{GreedyFAS})}
  \label{alg:greedyfas}
  \begin{algorithmic}
    \STATE {\bfseries Input:} Directed graph $D=(V,E)$
    \STATE {\bfseries Output:} Feedback arc set $E_{\mathrm{cycles}} \subseteq E$
    \STATE Initialize empty lists $L \leftarrow [\,]$, $R \leftarrow [\,]$
    \STATE Initialize $H \leftarrow D$
    \WHILE{$V(H) \neq \emptyset$}
      \IF{$H$ has a source $v$ (i.e.\ $d^{\mathrm{in}}_{H}(v)=0$)}
        \STATE Remove $v$ from $H$
        \STATE Append $v$ to the end of $L$
      \ELSIF{$H$ has a sink $v$ (i.e.\ $d^{\mathrm{out}}_{H}(v)=0$)}
        \STATE Remove $v$ from $H$
        \STATE Prepend $v$ to the beginning of $R$
      \ELSE
        \STATE Choose $v \in V(H)$ maximizing $d^{\mathrm{out}}_{H}(v) - d^{\mathrm{in}}_{H}(v)$
        \STATE Remove $v$ from $H$
        \STATE Append $v$ to the end of $L$
      \ENDIF
    \ENDWHILE
    \STATE Let $\pi$ be the linear ordering given by concatenation $L \circ R$
    \STATE $E_{\mathrm{cycles}} \leftarrow \{(u \to v)\in E : \pi(u) > \pi(v)\}$
    \STATE {\bfseries return} $E_{\mathrm{cycles}}$ 
  \end{algorithmic}
\end{algorithm}

\begin{algorithm}[H]
  \caption{Greedy Transitivity Editing (GreedyTE)}
  \label{alg:greedyte}
  \begin{algorithmic}
    \STATE {\bfseries Input:} Acyclic directed graph $D=(V,E)$
    \STATE {\bfseries Output:} Edge sets $E_{\mathrm{del}}$ and $E_{\mathrm{add}}$ such that $D' = (V, (E \cup E_{\mathrm{add}}) \setminus E_{\mathrm{del}})$ is transitively closed
    \STATE Initialize $E_{\mathrm{del}} \leftarrow \emptyset$
    \STATE Initialize $H \leftarrow D$
    \REPEAT
      \STATE Initialize $bestGain \leftarrow 0$
      \STATE Initialize $e^\star \leftarrow \emptyset$
      \FOR{{\bfseries each} edge $e \in E(H)$}
        \STATE Compute $gain \leftarrow |tc(H)| - |tc(H \setminus e)| - 1$
        \IF{$gain > bestGain$}
          \STATE $bestGain \leftarrow gain$
          \STATE $e^\star \leftarrow e$
        \ENDIF
      \ENDFOR
      \IF{$bestGain > 0$}
        \STATE $E_{\mathrm{del}} \leftarrow E_{\mathrm{del}} \cup \{e^\star\}$
        \STATE $H \leftarrow H \setminus e^\star$
      \ENDIF
    \UNTIL{$bestGain = 0$}
    \STATE Define $D' \leftarrow D \setminus E_{\mathrm{del}}$
    \STATE Set $E_{\mathrm{add}} \leftarrow E(tc(D')) \setminus E(D')$
    \STATE {\bfseries return} $(E_{\mathrm{del}}, E_{\mathrm{add}})$
  \end{algorithmic}
\end{algorithm}

\begin{algorithm}[H]
  \caption{Greedy Confounding Path Closure (GreedyCPC)}
  \label{alg:greedycpc}
  \begin{algorithmic}
    \STATE {\bfseries Input:} Mixed graph $G$ with bidirected edge set $B = B(G)$
    \STATE {\bfseries Output:} Bidirected edge sets $B_{\mathrm{del}} \subseteq B$ and $B_{\mathrm{add}}$ such that deleting $B_{\mathrm{del}}$ from $G$ and adding $B_{\mathrm{add}}$ makes it equal to its confounding path closure
    \STATE Initialize $B_{\mathrm{del}} \leftarrow \emptyset$
    \STATE Initialize $H \leftarrow G$
    \REPEAT
      \STATE Initialize $bestGain \leftarrow 0$
      \STATE Initialize $e^\star \leftarrow \emptyset$
      \FOR{{\bfseries each} edge $e \in B(H)$}
        \STATE Compute $gain \leftarrow |cpc(H)| - |cpc(H \setminus e)| - 1$
        \IF{$gain > bestGain$}
          \STATE $bestGain \leftarrow gain$
          \STATE $e^\star \leftarrow e$
        \ENDIF
      \ENDFOR
      \IF{$bestGain > 0$}
        \STATE $B_{\mathrm{del}} \leftarrow B_{\mathrm{del}} \cup \{e^\star\}$
        \STATE $H \leftarrow H \setminus e^\star$
      \ENDIF
    \UNTIL{$bestGain = 0$}
    \STATE Define $G' \leftarrow G \setminus B_{\mathrm{del}}$
    \STATE Set $B_{\mathrm{add}} \leftarrow B(cpc(G')) \setminus B(G')$
    \STATE {\bfseries return} $(B_{\mathrm{del}}, B_{\mathrm{add}})$
  \end{algorithmic}
\end{algorithm}

\section{Experiment Details} \label{apx:experiments}

\subsection{Synthetic Model Generation} \label{apx:synthetic}
We begin by stating the details for sampling causal ground truth in our synthetic experiments. For sampling linear bivariate causal statements (for the results in Figures~\ref{fig:synthetic} and~\ref{fig:sensitivity}), we implement the following procedure: 
\begin{enumerate}
    \item Draw random causal coefficients for $n+m$ variables from a standard normal distribution and error variances for each variable from an exponential distribution with variance $1$;
    \item Set each causal coefficient to $0$ independently with probability $1-p$;
    \item Given the remaining causal coefficients, calculate the covariance matrix of all $n+m$ variables, assuming independence between the error terms (resulting in a linear SEM without confounding);
    \item Rescale error variances and causal coefficients so that each variable has unit variance;
    \item Select $m$ variables uniformly at random to be the hidden variables and marginalize the covariance matrix and the causal coefficients (using Lemma~\ref{lem:marginalization}) to the remaining $n$ variables.
    \item Obtain the correct linear bivariate causal statements from the equation $A = (I-\Gamma)^{-1}$ (see Lemma~\ref{lem:unique_sem}).
    \item Perturb each bivariate causal statement by adding centered Gaussian noise with variance $\sigma$.
\end{enumerate}

Our procedure for graphical bivariate causal statements for the results in Figure~\ref{fig:synthetic-graphs} is the following: 
\begin{enumerate}
    \item Draw a uniformly random permutation on $\{1, \dots, n+m\}$ to get a causal ordering;
    \item Set each directed edge that is consistent with the ordering independently with probability $p$ to get a causal DAG;
    \item Choose the set of $m$ hidden variables uniformly at random and marginalize the graph to the remaining $n$ observed variables according to Definition~\ref{defn:graph-marginalization};
    \item Obtain the correct graphical bivariate causal statements by further marginalizing the causal graph to each possible pair of variables. 
    \item Add errors by choosing a possible directed or bidirected edge between all variables uniformly at random and adding it or deleting it from the statement graph depending on whether it was absent or present. 
\end{enumerate}

\subsection{Data and Large Language Models} \label{apx:data-models}
The data for the $7$ country-level indicators that we consider is obtained from the gapminder dataset \citet{gapminder}. We use the following detailed variable descriptions for our experiments:

\begin{itemize}
   \item population density: Average number of people per square kilometer of land in the given country,
    \item literacy rate: adult literacy rate is the percentage of people ages 15 and above who can, with understanding, read and write a short, simple statement on their everyday life,
    \item daily income: mean daily household per capita income or consumption expenditure in constant international dollars,
    \item sanitation access: percentage of people using at least basic sanitation services (improved sanitation facilities not shared with other households),
    \item smoking: percentage of people over age 15 that smoke,
    \item happiness score: national average response to a happiness survey, with scores ranging from 0 (worst) to 100 (best),
    \item life expectancy: average life expectancy at birth in years.
\end{itemize}

\begin{table}[t]
\caption{Correlation matrix for the gapminder dataset.}
\centering
\small
\begin{tabular}{lrrrrrrr}
\toprule
 & \rotatebox{45}{Pop.\ dens.}
 & \rotatebox{45}{Literacy}
 & \rotatebox{45}{Income}
 & \rotatebox{45}{Sanitation}
 & \rotatebox{45}{Smoking}
 & \rotatebox{45}{Happiness}
 & \rotatebox{45}{Life exp.} \\
\midrule
Population density
    & 1.000 & 0.109 & 0.708 & 0.104 & -0.018 & 0.078 & 0.128 \\
Literacy rate
    & 0.109 & 1.000 & 0.373 & 0.798 & 0.109 & 0.526 & 0.716 \\
Daily income
    & 0.708 & 0.373 & 1.000 & 0.381 & 0.019 & 0.745 & 0.424 \\
Sanitation access
    & 0.104 & 0.798 & 0.381 & 1.000 & 0.190 & 0.656 & 0.817 \\
Smoking
    & -0.018 & 0.109 & 0.019 & 0.190 & 1.000 & 0.103 & 0.096 \\
Happiness score
    & 0.078 & 0.526 & 0.745 & 0.656 & 0.103 & 1.000 & 0.737 \\
Life expectancy
    & 0.128 & 0.716 & 0.424 & 0.817 & 0.096 & 0.737 & 1.000 \\
\bottomrule
\end{tabular}
\label{tab:corr_gapminder}
\end{table}

We show the empirical correlation matrix of these variables in Table~\ref{tab:corr_gapminder}. We selected these variables as they exhibit a variety of pairwise correlation strengths and causal structure: most notably, population density was chosen as a variable that likely only has weak direct causal relations with many of the other variables, but there is still strong confounding with daily income. On the other hand, there likely are strong causal pathways such as literacy rate $\to$ daily income $\to$ happiness score. While we accessed all data through \url{https://www.gapminder.org/data/}, the data for some of the variables is originally from different sources, shown in the table below. All data is made available through the CC-BY license. 

\begin{center}
   \begin{tabular}{c|c}
     Data &  Source \\
     \toprule 
     Population Density & \citet{UN}\\
     Literacy Rate &  \citet{UNESCO}\\
     Access to sanitation & \citet{WBG}\\
     Smoking & \citet{WHO} \\
\end{tabular} 
\end{center}

Table~\ref{tab:llms} shows the large language models that we query. All models were accessed through Amazon Bedrock and we used the following parameters (except that the Claude Opus models do not allow for specifying top P):
\begin{itemize}
    \item Maximum number of output tokens: 2048
    \item temperature: 0.6
    \item top P: 0.7.
\end{itemize}

\begin{table}[b]
\caption{Large language models used in our experiments.}
\centering
\small
\begin{tabular}{c|c|c}
     Model name & Model ID  & Provider \\
     \toprule 
     Claude Opus 4.5  & claude-opus-4-5-20251101-v1:0 & Anthropic\\
     gpt-oss-120b & gpt-oss-120b-1:0 & OpenAI\\
     gpt-oss-20b & gpt-oss-20b-1:0 & OpenAI\\
     Gemma 3 27B IT & gemma-3-27b-it & Google \\
     Gemma 3 4B IT & gemma-3-4b-it & Google\\
     Kimi K2 Thinking & kimi-k2-thinking & Moonshot AI\\
     Mistral Large 3 & mistral-large-3-675b-instruct & Mistral AI\\
     Magistral Small 2509 & magistral-small-2509 & Mistral AI\\
     Qwen3 235B A22B 2507 & qwen3-235b-a22b-2507-v1:0 & Qwen\\
     Qwen3 Next 80B A3B & qwen3-next-80b-a3b & Qwen
       
\end{tabular}
\label{tab:llms}
\end{table}

\subsection{Prompts} \label{apx:prompts}
\paragraph{Prompt for linear causal statements:}
We use the following conversation to elicit linear bivariate statements from the LLMs that we test. 

\begin{dialogue}
\speak{System} You are a causality expert, tasked to estimate standardized TOTAL causal effects between country development indicators.
Return your answer in HTML format:

\textless answer\textgreater CAUSAL\textunderscore COEFFICIENT: \textless number \textgreater \textless /answer\textgreater 

For example: \textless answer\textgreater CAUSAL\textunderscore COEFFICIENT: 0.35\textless/answer\textgreater\;or \textless answer\textgreater CAUSAL\textunderscore COEFFICIENT: -0.62\textless/answer\textgreater

The causal coefficient quantifies the expected change of the effect variable in standard deviations, given an intervention that changes the cause variable by 1 standard deviation. It includes the effect of all direct causal pathways from the cause to the effect variable. Do not assume away confounding; use realistic domain knowledge. No other text.

\speak{User} I have observational data on 7 country-level development indicators.

Correlation matrix:
\{corr\}

Variable descriptions:
\{var\textunderscore names, var\textunderscore desc\}

Before we begin estimating causal coefficients, please determine a plausible causal ordering of these 7 variables (from root causes to downstream effects). Return the ordering as a comma-separated list of variable names inside \textless ordering\textgreater\;tags. Use the exact variable names shown above.

For example: \textless ordering\textgreater var\textunderscore a, var\textunderscore b, var\textunderscore c, ...\textless/ordering\textgreater

\speak{Assistant} \textless ordering\textgreater population\textunderscore density, literacy\textunderscore rate, daily\textunderscore income, sanitation\textunderscore access, smoking, happiness\textunderscore score, life\textunderscore expectancy \textless/ordering\textgreater

\speak{User} I will now ask you to estimate the total linear causal coefficient for several pairs of variables, following the ordering you provided. For each question, please provide your answer in the format: \textless answer\textgreater CAUSAL\textunderscore COEFFICIENT: \textless number\textgreater \textless/answer\textgreater

\speak{User} Estimate the total linear causal coefficient for the causal effect of population\textunderscore density on literacy\textunderscore rate.

\speak{Assistant}  \textless answer\textgreater CAUSAL\textunderscore COEFFICIENT: 0.1 \textless/answer\textgreater

\speak{User} Estimate the total linear causal coefficient for the causal effect of population density on daily income.

\speak{Assistant}  \textless answer\textgreater CAUSAL\textunderscore COEFFICIENT: 0.2 \textless/answer\textgreater

\speak{User} \dots

\end{dialogue}

Here, the assistant answers are only examples. Moreover, \{corr\} is a placeholder for the correlation matrix in Table~\ref{tab:corr_gapminder} that we provide as a string, \{var\textunderscore names\} is a placeholder for the variable names and \{var\textunderscore desc\} is a placeholder for the variable descriptions listed in Section~\ref{apx:data-models}. If all assistant answers are in the correct format, then we simply keep asking for the causal effects for all pairs of the seven variables in our dataset, according to the provided causal ordering. If a model answer is not in the correct format, then we send one of the following messages:

\begin{dialogue}
    \speak{User} Please provide the causal ordering as a comma-separated list of the exact variable names inside \textless ordering\textgreater\;tags. The variable names are: \{var\textunderscore names\}
\end{dialogue}

\begin{dialogue}
    \speak{User} Please provide your answer in the correct format: \textless answer\textgreater CAUSAL\textunderscore COEFFICIENT: \textless number\textgreater \textless/answer\textgreater
\end{dialogue}

These messages are sent up to 5 times in a row until an answer in the correct format is obtained. In our experiments, this was sufficient to obtain answers in the correct format in most runs. 

\paragraph{Prompt for graphical causal statements:}
To obtain lists of graphical bivariate causal statements from LLMs, we use the following template for conversation:

\begin{dialogue}
    \speak{System} You are a causality expert analyzing relationships between country-level development indicators.

For each pair of variables, you will assess:\\
1. Whether there is a total causal effect between them, and in which direction\\
2. Whether there is confounding (correlation not explained by the causal effect between the pair)

Guidelines:\\
- Answer YES for causal effect if you expect that an intervention on the cause variable would significantly change the effect variable\\
- Be conservative about confounding - only answer YES for confounding if MOST of the correlation cannot be explained by the causal effect\\
- Use realistic domain knowledge about socioeconomic factors

Return your answer in the following HTML format:

\textless causal\textunderscore effect\textgreater \\
    \textless exists\textgreater YES or NO\textless /exists\textgreater \\
    \textless direction\textgreater A\textunderscore TO\textunderscore B or B\textunderscore TO\textunderscore A or NONE\textless /direction\textgreater \\
\textless /causal\textunderscore effect\textgreater \\
\textless confounding\textgreater \\
    \textless exists\textgreater YES or NO\textless /exists\textgreater \\
\textless /confounding\textgreater 

Where A\textunderscore TO\textunderscore B means the first variable causes the second, and B\textunderscore TO\textunderscore A means the second causes the first.
    
    \speak{User} I have observational data on 7 country-level development indicators.

Correlation matrix:
\{corr\}

Variable descriptions:
\{var\textunderscore names, var\textunderscore desc\}

I will now ask you about each pair of variables. For each pair:\\
1. Assess if there is a causal effect \\
2. Assess if there is strong confounding
    
    \speak{User} Consider the pair: "Population density" (A) and "Literacy rate" (B).

1. Is there a causal effect between them? If yes, in which direction?\\
2. Is there confounding (significant correlation not explained by the causal effect between these two)?

Provide your assessment in the required format.

    \speak{Assistant} \textless causal\textunderscore effect\textgreater \\
    \textless exists\textgreater NO\textless /exists\textgreater \\
    \textless direction\textgreater NONE\textless /direction\textgreater \\
\textless /causal\textunderscore effect\textgreater \\
\textless confounding\textgreater \\
    \textless exists\textgreater NO\textless /exists\textgreater \\
\textless /confounding\textgreater 

    \speak{User} Consider the pair: "Daily income" (A) and "Population density" (B).

1. Is there a causal effect between them? If yes, in which direction? \\
2. Is there confounding (significant correlation not explained by the causal effect between these two)?

Provide your assessment in the required format.

    \speak{Assistant} \textless causal\textunderscore effect\textgreater \\
    \textless exists\textgreater YES \textless /exists\textgreater \\
    \textless direction\textgreater B\textunderscore TO\textunderscore A\textless /direction\textgreater \\
\textless /causal\textunderscore effect\textgreater \\
\textless confounding\textgreater \\
    \textless exists\textgreater YES \textless /exists\textgreater \\
\textless /confounding\textgreater 

    \speak{User} \dots 
    
\end{dialogue}

If all assistant answers are in the correct format, we keep asking for the causal statements on all pairs of the seven variables in our dataset. However, we present each pair in a random order, since our graphical incompatibility score also tests the extent to which the causal statements are acyclic. If a model answer is not in the correct format, then we send the following message:

\begin{dialogue}
    \speak{User} Please provide your answer in the correct format:
\textless causal\textunderscore effect\textgreater \\
    \textless exists\textgreater YES or NO\textless /exists\textgreater \\
    \textless direction\textgreater A\textunderscore TO\textunderscore B or B\textunderscore TO\textunderscore A or NONE\textless /direction\textgreater \\
\textless /causal\textunderscore effect\textgreater \\
\textless confounding\textgreater \\
    \textless exists\textgreater YES or NO\textless /exists\textgreater \\
\textless /confounding\textgreater 
\end{dialogue}

Again, sending this message up to $5$ times was sufficient to obtain valid model answers.

\subsection{Additional Experiments} \label{apx:experiment-details}
First, we present additional plots for the difference between compatibility scores based on empirical and true covariance matrices over different combinations of parameters, see Figures~\ref{fig:sensitivity_grid} and~\ref{fig:sensitivity_sign_grid}. The plots are obtained as follows: 
\begin{enumerate}
    \item sample $50$ different linear Gaussian SEMs (see Appendix~\ref{apx:synthetic}),
    \item for each model, sample $20$ different random perturbations with variance $\sigma$ for the bivariate causal statements,
    \item for each model-statement combination, draw iid samples $X \sim \mathcal{N}(0, \Sigma_\bX)$ 10 different times and compute compatibility scores. 
\end{enumerate}
As the random error in the statements grows, the absolute value of the compatibility scores becomes larger, and more samples might be required to accurately approximate the true compatibility score. However, since only the sign matters for falsification, an accurate approximation is not always needed, and in fact, the signs converge very quickly across all tested combinations of model parameters.

\begin{figure}
    \centering
    \includegraphics[width=0.8\textwidth]{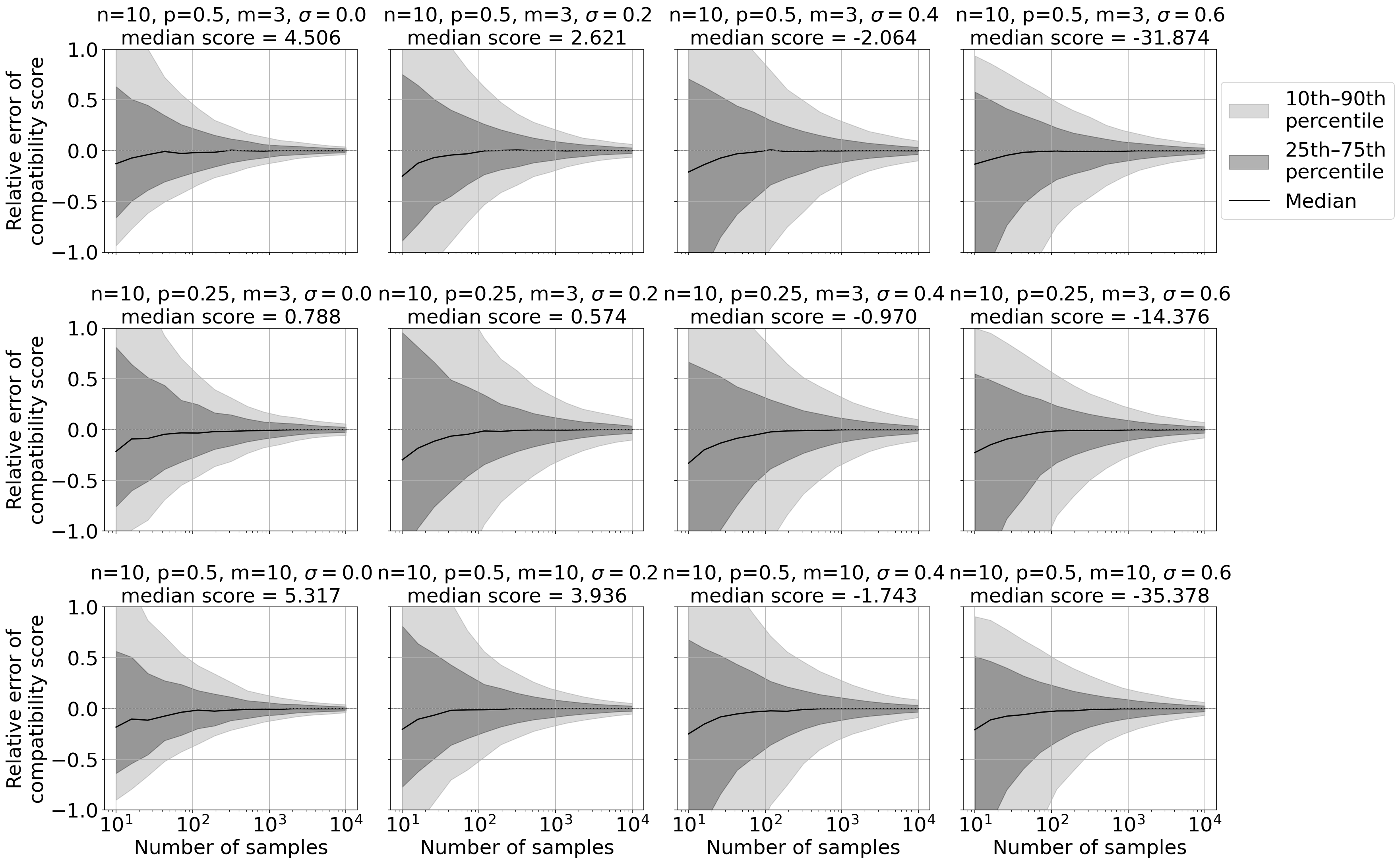}
    \caption{Relative error between compatibility scores based on an estimated covariance matrix with finitely many samples and compatibility scores based on the true covariance matrix.}
    \label{fig:sensitivity_grid}
\end{figure}

\begin{figure}
    \centering
    \includegraphics[width=0.8\textwidth]{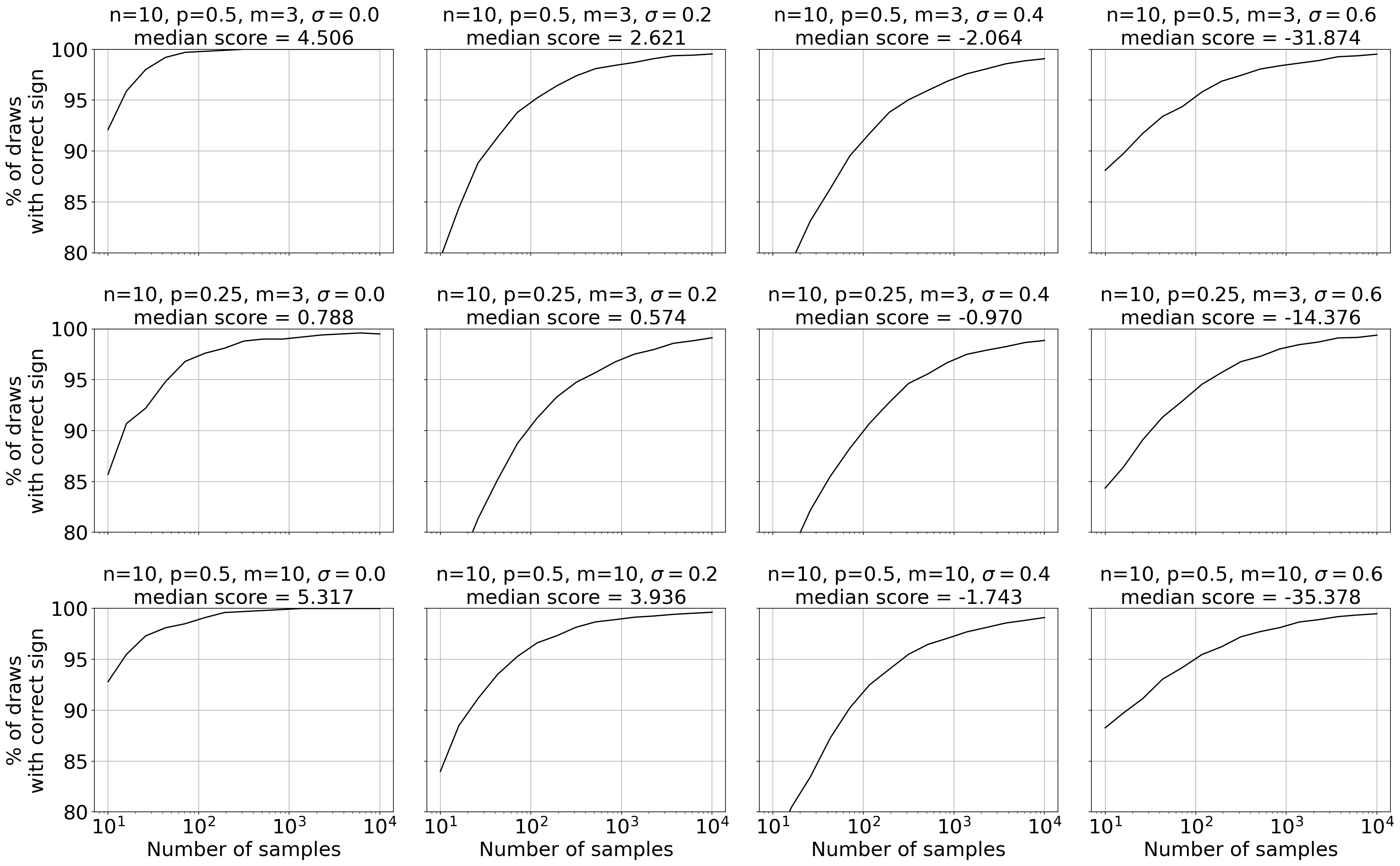}
    \caption{Percentage of draws where the compatibility score based on an empirical covariance matrix and the compatibility score based on the true covariance matrix have the same sign.}
    \label{fig:sensitivity_sign_grid}
\end{figure}

\begin{figure}
    \centering
    \includegraphics[width=0.8\textwidth]{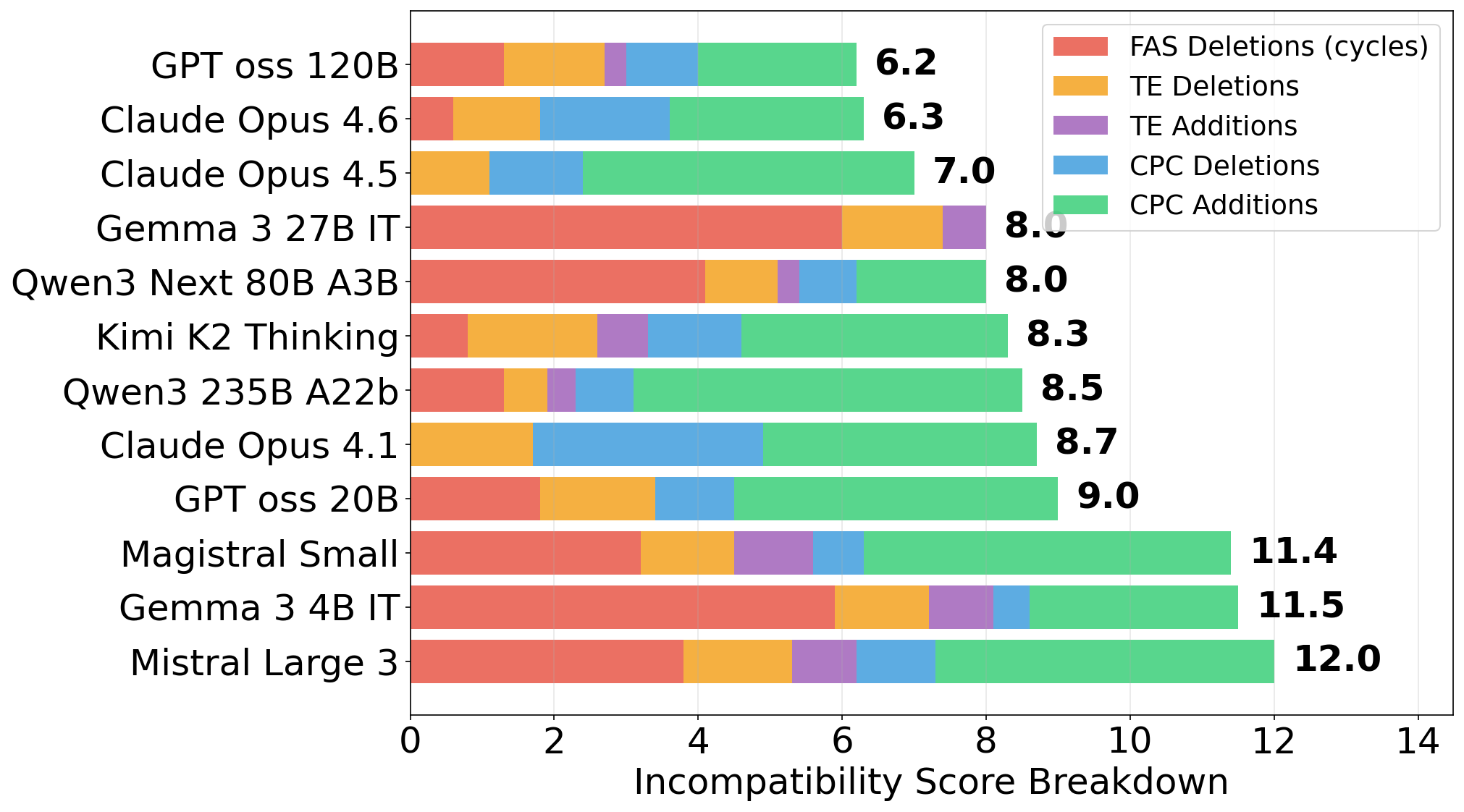}
    \caption{Components of the incompatibility score. Here, FAS deletions is the number of edges that our algorithm deleted to make the statement graph acyclic. TE deletions and TE additions refer to the edge modifications to make the statement graph transitively closed, and CPC deletions and CPC additions refer to the edge modifications to make the statement graph satisfy the third property of Lemma~\ref{lem:characterization}.}
    \label{fig:components}
\end{figure}

Next, accompanying our results in Section~\ref{sec:graphical-experiments}, we provide a breakdown of the average incompatibility scores into the different components calculated by our heuristic algorithm (see Equation~\eqref{eq:heuristic-score}) in Figure~\ref{fig:components}. Notably, Gemma 3 27B IT is the only model that does not have inconsistencies with respect to its bidirected edges. However, a closer look (Figure~\ref{fig:numedges}) reveals that it is also the model that reports complete or almost complete graphs in all runs. Indeed, the third property of Lemma~\ref{lem:characterization} is trivially satisfied when there is a bidirected edge between every single pair of variables. We tried to avoid this situation by formulating the prompt for graphical causal statements in a way that encourages models to be conservative with placing bidirected edges. However, there is still a large variance in the density of the statement graphs for different models. When focusing on the subset of statement graphs with density at most $2/3$, the lowest achieved incompatibility score is $4$ for a statement graph given by Claude Opus 4.6. For reference, we show this graph in Figure~\ref{fig:statementgraph}, but we remark that a low incompatibility score does not imply correctness of the graph.

\begin{figure}
    \centering
    \includegraphics[width=0.8\textwidth]{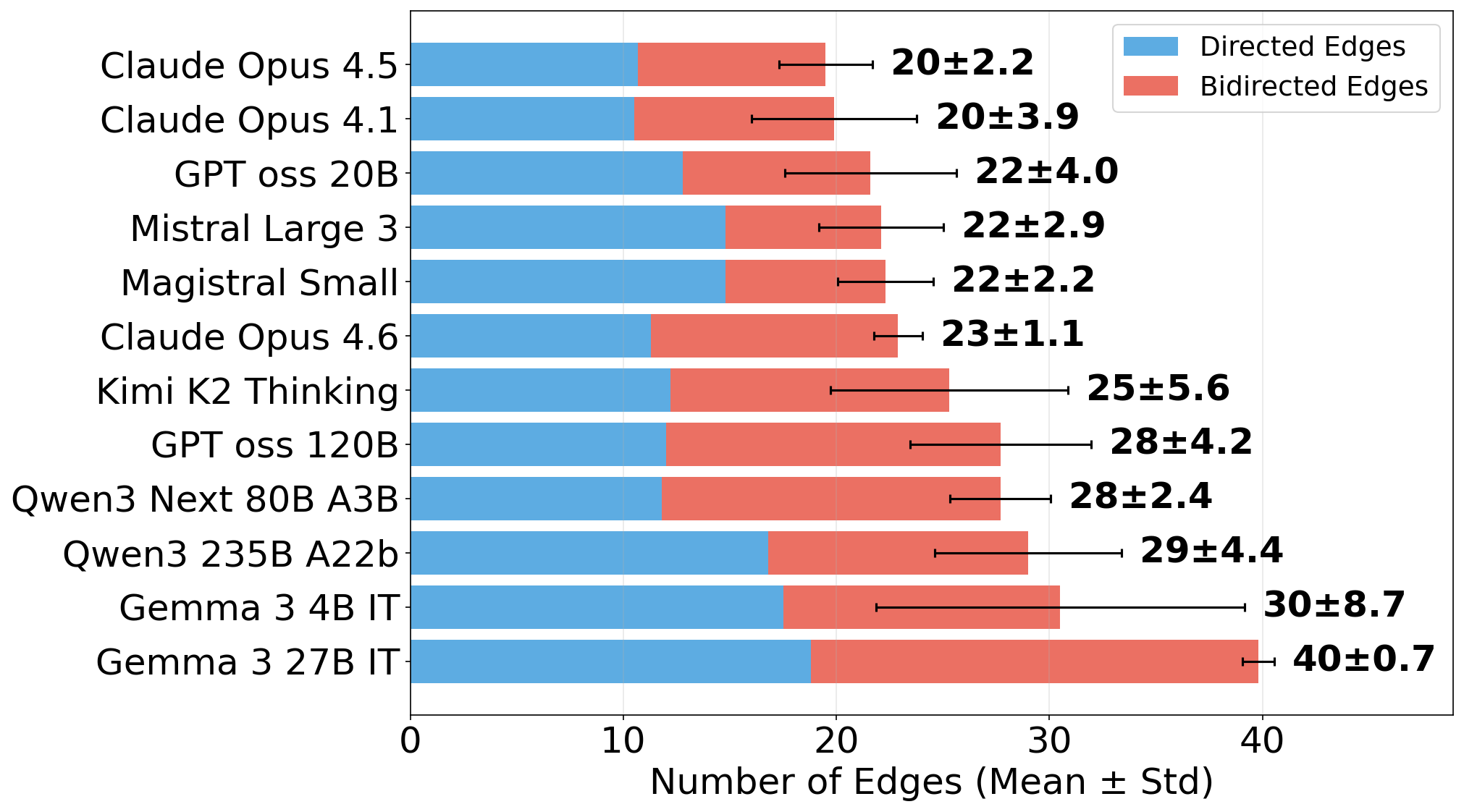}
    \caption{Average number of edges of the statement graphs from each model.}
    \label{fig:numedges}
\end{figure}

\begin{figure}
    \centering
    \includegraphics[width=0.8\textwidth]{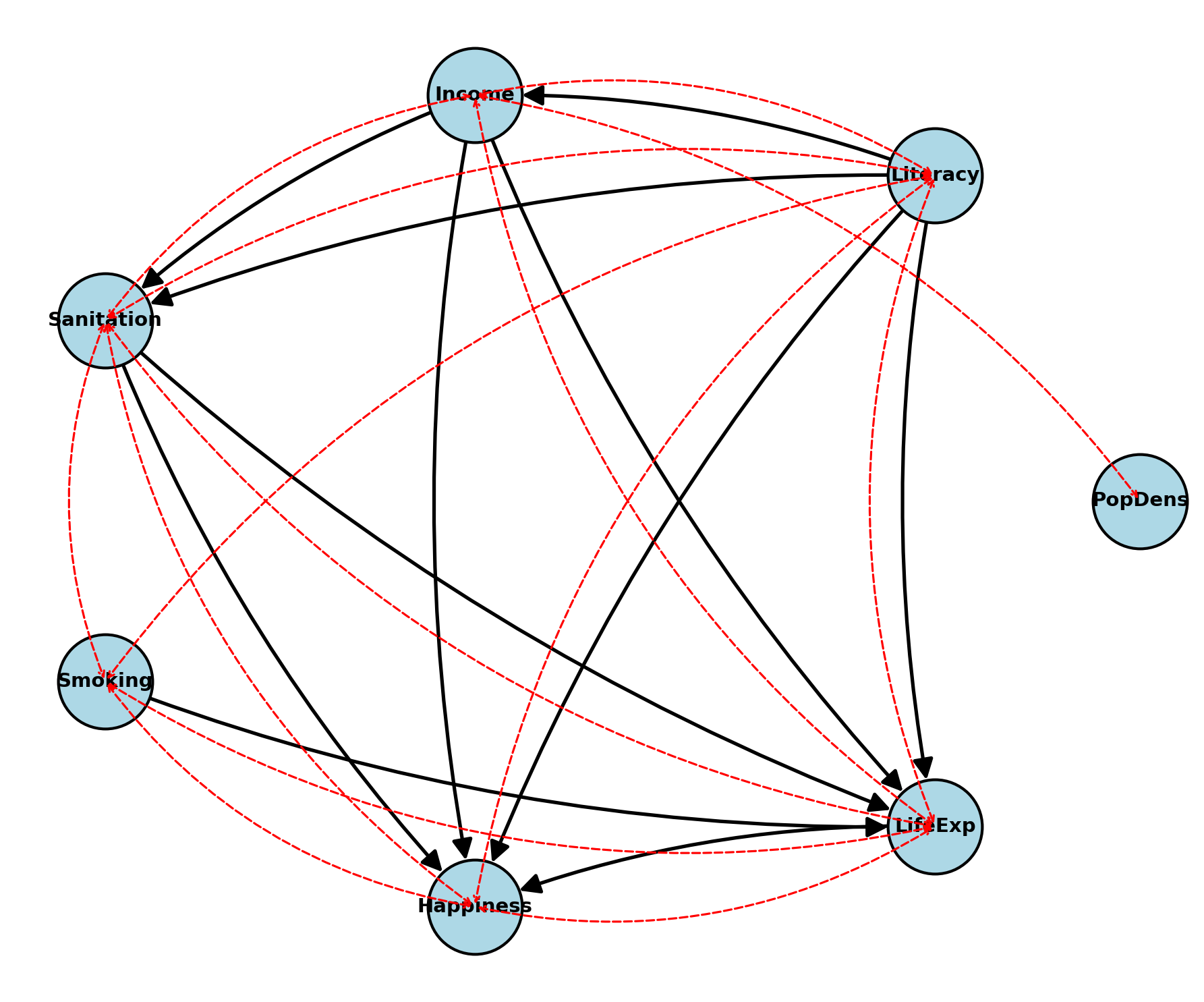}
    \caption{Statement graph obtained from Claude Opus 4.6 with incompatibility score $4$, $11$ directed edges and $14$ bidirected edges.}
    \label{fig:statementgraph}
\end{figure}

\end{document}